\documentclass[10pt,twocolumn,letterpaper]{article}
\usepackage[textwidth=6.75in,textheight=9in,columnsep=0.25in,centering]{geometry}
\usepackage[T1]{fontenc}
\usepackage{times}
\usepackage{microtype}
\usepackage{amsmath,amssymb,amsthm}
\usepackage{graphicx,booktabs,xcolor}
\usepackage{balance}
\usepackage{algorithm,algorithmic}
\usepackage{tikz,pgfplots}
\usetikzlibrary{positioning,fit,backgrounds,arrows.meta,calc,decorations.pathreplacing}
\pgfplotsset{compat=1.17}
\usepackage[round,authoryear]{natbib}
\usepackage[font=small,labelfont=bf]{caption}
\usepackage{xurl}
\usepackage[hidelinks]{hyperref}
\newtheorem{proposition}{Proposition}
\definecolor{mlipsblue}{RGB}{37,103,153}
\definecolor{mlipsorange}{RGB}{202,112,42}
\definecolor{mlipsgreen}{RGB}{55,130,98}
\definecolor{mlipspurple}{RGB}{124,85,154}
\pgfplotsset{mlipsaxis/.style={
  tick label style={font=\scriptsize},
  label style={font=\small},
  axis line style={black!50},
  grid=major,grid style={black!10},
  axis background/.style={fill=white},
  scaled ticks=false,
  /pgf/number format/1000 sep={}}}

\title{MLIPS: Multi-Layer Indexer Pool Sharing}
\author{Samkit Jain \and Sai Kishan Pampana \and Shlok Agarwal \and Yashwant Marathe \and Bhargava Gopireddy \and Aradhana Mohan Parvathy \and Sachin Idgunji}
\date{NVIDIA Corporation\\
\small\texttt{\{samkitj,spampana,shloka,ymarathe,bgopireddy,}\\
\small\texttt{amohanparvat,sidgunji\}@nvidia.com}}

\begin{document}
\maketitle
\begin{abstract}
% Source fragment: sections/abstract.tex
Sparse attention limits the keys used by each query, but finding those keys can still require
a full-prefix scan at every layer. We introduce \textbf{MLIPS (Multi-Layer Indexer Pool
Sharing)}, a training-free method that shares candidate search while retaining each
consumer's own scoring and selection. A producer publishes a top-$M$ pool larger than
the attention budget $K$; subsequent layers select their own top-$K$ from that pool.
Measurements across three sparse-attention model families show substantially higher
coverage from broad pools than overlap between final selections. We implement layerwise
sharing, persistent decode reuse, and a coarse producer in vLLM for DeepSeek-V4-Flash.
Across four long-context suites, three consumer layers per producer give
$1.14$--$1.40\times$ speedups and $14.8$--$27.4\%$ lower GPU energy, with observed accuracy
changes of $-1.49$ to $+0.29$ percentage points in single runs. More extensive sharing
exposes task-dependent quality losses. Sampled prefill traces also reveal recurring selected
KV positions across query groups and layers. We characterize this shared working-set
structure and use it to motivate Top-$G$, an exploratory bounded pool construction for
prefill. Its deployment, task accuracy, and serving benefits remain open.
\end{abstract}

% Source fragment: sections/introduction.tex
\section{Introduction}
\label{sec:intro}

Sparse attention makes long-context language models more efficient by restricting each
query to a small set of keys. Yet selecting those keys can remain expensive. In DeepSeek
Sparse Attention, a Lightning Indexer scores the visible prefix before choosing the top-$K$
positions used by attention \citep{deepseekv32,deepseekv4}. Attention then operates on a
bounded selection, while the indexer's search grows with context length. Repeating that
search across sparse layers and queries leaves substantial work to optimize.

Sharing selections is one established response. IndexCache lets intervening layers inherit
a retained indexer's top-$K$, with training-free calibration or training-aware distillation
\citep{indexcache}. DeepSeek-V4.1-Flash trains deeper indexers to re-select from a larger
pool produced by an earlier layer \citep{deepseekv41}. These approaches motivate a
deployment question: how much candidate search can an existing checkpoint share without
retraining its indexers or changing its attention operator?

Our starting observation is that \emph{agreement on broad candidate sets is much stronger
than agreement on final selections}. On DeepSeek-V4-Flash traces spanning contexts up to
one million tokens, an adjacent producer's top-$512$ contains $23$--$34\%$ of a consumer's
independent top-$512$, depending on the suite. A $16{,}384$-entry producer pool raises that
coverage to $72$--$85\%$. The same distinction appears in MiniMax-M3 and GLM-5.1 at
their native selection granularities (Section~\ref{sec:analysis}). A consumer can therefore
retain many of its preferred candidates without repeating the full scan, provided that it
can rank the candidates for itself.

We introduce \textbf{MLIPS (Multi-Layer Indexer Pool Sharing)}. A producer runs the full
indexer, selects its own top-$K$, and publishes a wider top-$M$ pool. Each consumer scores
those candidates with its own indexer and selects its own top-$K$. Consumers preserve their
attention computation and the checkpoint's organization of KV state. The restriction is
in the search domain: a consumer can re-rank pooled entries, but cannot recover an omitted
one. Layerwise sharing applies during prefill and decode. Persistent decode reuse extends
a pool across later generation steps; a coarse producer further reduces candidate-generation
work by scoring groups of keys.

We implement these paths in vLLM and evaluate DeepSeek-V4-Flash on four long-context
suites. With three consumer layers per producer and no reuse across decode steps,
MLIPS gives $1.14$--$1.40\times$ speedups and $14.8$--$27.4\%$ lower GPU energy.
Observed accuracy changes range from $-1.49$ to $+0.29$ points
(Table~\ref{tab:main-results}).

The traces also expose a broader opportunity: nearby queries and sparse layers repeatedly
select a shared working set of KV positions. We characterize this structure and suggest
Top-$G$, a bounded pool assembled from a query group's producer selections
(Section~\ref{sec:topg}). This is an empirical observation and an exploratory construction;
turning it into a serving mechanism remains future work.

\begin{samepage}
Our contributions are:
\begin{itemize}
\item \textbf{Evidence for candidate reuse.} We distinguish final-selection overlap from
broad-pool coverage across three model families and characterize its dependence on query
position and sharing extent.
\item \textbf{An inference-time sharing method and serving evaluation.} We develop MLIPS's
producer--consumer design, implement layerwise and persistent decode reuse with a coarse
producer option, and measure accuracy, wall time, and GPU energy.
\item \textbf{Shared working-set structure in prefill.} We measure recurring selected KV
positions across queries and layers and outline Top-$G$ as a way to exploit that structure.
\end{itemize}
\end{samepage}

% Source fragment: sections/background_related_work.tex
\section{Background and Related Work}
\label{sec:background-related-work}
\label{sec:background}

\paragraph{Sparse attention and candidate search.}
An indexer assigns relevance scores to causally visible keys and selects the top-$K$ indices
for attention \citep{deepseekv32}. The indexer scores and the attention weights are separate:
attention computes its own weights after selection. We denote the visible search domain by
$N$ entries. In DeepSeek-V4-Flash's compressed sparse-attention (CSA) layers, these are
compressed entries, with approximately $N=L/4$ for context length $L$, and $K=512$
\citep{deepseekv4}. A separate local-attention window remains active.
A full indexer scans $N$ entries even though the global attention branch uses only $K$.
This creates the opportunity to share candidate search.

\paragraph{Pools, selections, and schedules.}
A producer publishes $M$ candidate indices, where $K<M\ll N$; a consumer scores those
candidates and selects its own top-$K$. Selection inheritance instead directly adopts the
producer's top-$K$. Pools contain indices, not shared attention scores or KV vectors.
We use $h$ for the number of consumer sparse layers served by each producer and $qh$
for the number of subsequent decode steps served before a rebuild. Exact
producer--consumer distance is denoted $d$ when discussing an individual layer pair;
sparse-layer distance need not equal Transformer-layer distance. Schedule averages at
different $h$ can include different producer placements and several distances.

\paragraph{What coverage measures.}
Let $\mathcal{C}_K$ be a consumer's top-$K$ under full search and let $\mathcal{P}_K$
and $\mathcal{P}_M$ be the producer's selection and wider pool. We measure
\begin{equation}
R_{KK}=\frac{|\mathcal{C}_K\cap\mathcal{P}_K|}{K},
\qquad
R_{KM}=\frac{|\mathcal{C}_K\cap\mathcal{P}_M|}{K}.
\label{eq:recalls}
\end{equation}
Selection overlap $R_{KK}$ describes agreement between final selections. Pool coverage
$R_{KM}$ describes the consumer's preferred entries available for re-selection.
Nested producer sets imply $R_{KM}\geq R_{KK}$. Neither metric measures retained
attention mass or downstream accuracy. Across model families, counts use native selection
units---compressed entries, blocks, or tokens---so equal pool sizes need not have equal
cost.

\paragraph{Cross-layer reuse.}
\label{sec:related}
IndexCache studies top-$K$ inheritance, using calibration to choose retained indexers
without weight updates or distillation to train them \citep{indexcache}. IndexShare and
LongCat Sparse Attention also share selections through trained architectures
\citep{glm52,lsa}. DeepSeek-V4.1-Flash is the closest pool-based design: during decode,
a full indexer constructs a wider candidate pool and deeper Reindex layers score that pool
independently; the restriction is introduced during post-training \citep{deepseekv41}.
MLIPS evaluates pool re-selection on existing checkpoints and extends reuse across
successive decode steps. It preserves the host checkpoint's attention-state organization.

\paragraph{Coarse search and query sharing.}
HISA scores mean-pooled block representatives before fine selection \citep{hisa}.
MLIPS's coarse producer combines this type of filtering with pool sharing across layers.
PIVOT forms a proxy from jointly available queries and runs one shared prefix scan;
its Refine variant re-scores a broader candidate set for each query \citep{pivot}.
This establishes query-group candidate search. Our persistent decode pool instead serves
queries that arrive at later steps. The exploratory Top-$G$ study examines producer-selection
memberships across query groups and layers, rather than constructing a proxy query.

\paragraph{Routing and KV storage.}
Cross-Layer Attention and YOCO share KV state across layers
\citep{brandon2024cla,sun2024yoco}; You Only Index Once combines shared routing with
YOCO \citep{clsa}. MLIPS shares candidate indices without requiring KV sharing.
Restricting the active working set may support offloading, but does not itself reduce
total KV storage. We analyze that opportunity separately in Appendix~\ref{app:offload}.
Together, these distinctions position MLIPS as a study of training-free candidate reuse
and its measured quality--efficiency tradeoffs.

% Source fragment: sections/analysis.tex
\section{Evidence for Candidate Reuse}
\label{sec:analysis}

% Source fragment: figures/fig_reuse_summary.tex
\begin{figure*}[t]
\centering
\begin{minipage}[t]{0.49\textwidth}
\centering
\begin{tikzpicture}
\begin{axis}[mlipsaxis,width=\linewidth,height=4.3cm,
  ybar=3pt,bar width=12pt,ymin=0,ymax=1,
  symbolic x coords={LB2,RULER,MRCR},xtick=data,
  ylabel={Index recall},ytick={0,.25,.5,.75,1},
  nodes near coords,nodes near coords style={font=\scriptsize},
  legend style={at={(0.5,1.05)},anchor=south,legend columns=2,
    draw=none,font=\scriptsize},
  enlarge x limits=0.25]
\addplot[fill=mlipsorange!65,draw=mlipsorange] coordinates
  {(LB2,.34) (RULER,.23) (MRCR,.28)};
\addplot[fill=mlipsblue!60,draw=mlipsblue] coordinates
  {(LB2,.85) (RULER,.72) (MRCR,.81)};
\legend{Selection overlap $R_{KK}$,Pool coverage $R_{KM}$}
\end{axis}
\end{tikzpicture}
\par{\small (a) All sampled query positions in up-to-1M captures.}
\end{minipage}\hfill
\begin{minipage}[t]{0.49\textwidth}
\centering
\begin{tikzpicture}
\begin{axis}[mlipsaxis,width=\linewidth,height=4.3cm,
  ymin=0,ymax=1,xmin=-.2,xmax=2.2,
  xtick={0,1,2},xticklabels={0--200K,200--500K,500K--1M},
  ylabel={Index recall},ytick={0,.25,.5,.75,1},
  nodes near coords,nodes near coords style={font=\scriptsize},
  legend style={at={(0.5,1.05)},anchor=south,legend columns=2,
    draw=none,font=\scriptsize}]
\addplot[color=mlipsorange,mark=square*,thick] coordinates
  {(0,.31) (1,.23) (2,.20)};
\addplot[color=mlipsblue,mark=*,thick] coordinates
  {(0,.90) (1,.76) (2,.66)};
\legend{$R_{KK}$,$R_{KM}$}
\end{axis}
\end{tikzpicture}
\par{\small (b) RULER-1M, grouped by query position.}
\end{minipage}
\caption{A wider pool increases available consumer choices, but coverage declines
at later positions. DeepSeek-V4-Flash, $h=1$, $K=512$, $M=16{,}384$;
values are rounded from the capture summaries. Bands refer to positions
within prompts. Full cross-model results appear in Appendix~\ref{app:recall}.}
\label{fig:reuse-summary}
\end{figure*}

We first ask whether existing indexers provide enough shared candidate structure to
justify restricting a consumer's search. We measure the index recalls in
Equation~\ref{eq:recalls} on DeepSeek-V4-Flash, MiniMax-M3, and GLM-5.1 captures
\citep{deepseekv4,msa,glm51},
using each family's native selection unit. Captures use the ten largest prompts from each
of three suites. A context band denotes \emph{query positions within prompts}, rather than
a separate prompt-length condition. Full heatmaps, capture details, and pooling conventions
are in Appendix~\ref{app:recall}; Figure~\ref{fig:reuse-summary} highlights the main trends.

\paragraph{Broad pools retain more consumer choices.}
We use the consumer's independently computed full-search top-$K$ as the reference.
Pool coverage measures the fraction of these indices present in the producer's wider
top-$M$ pool; final-selection overlap counts only those also in the producer's top-$K$.
At query positions below $200$K, broad pools retain $89$--$94\%$ of the consumer's
choices for adjacent producer--consumer pairs across the three families and suites.
The producer's final top-$K$ retains only $31$--$65\%$.
The pools contain $16{,}384$ compressed entries for DeepSeek-V4-Flash, $128$ blocks
for MiniMax-M3, and $16{,}384$ tokens for GLM-5.1.

Across query positions in up-to-$1$M captures, DeepSeek-V4-Flash's pool coverage is
$72$--$85\%$, compared with $23$--$34\%$ final-selection overlap. MiniMax-M3 gives
$74$--$85\%$ coverage, compared with $47$--$56\%$ overlap.
Thus a wider pool makes many more of the consumer's preferred indices available for
re-selection. This measures candidate availability, not attention-weight agreement
or task accuracy; task-level consequences are evaluated in Section~\ref{sec:evaluation}.

\paragraph{Sharing has a context and depth boundary.}
Coverage declines toward later query positions. On RULER-1M, DeepSeek-V4-Flash's
adjacent-layer pool coverage falls from $0.90$ to $0.76$ to $0.66$ across the
$0$--$200$K, $200$--$500$K, and $500$K--$1$M position bands.
Serving more consumer layers from one producer also generally lowers coverage, because
the later consumers are farther from the producer. However, each schedule averages
over a different set of layer pairs. With one consumer per producer ($h=1$), for example,
the third sparse layer is a producer; with two ($h=2$), it consumes the first layer's
pool. Such changes can produce local increases in the average even as sharing grows.
Pool width and sharing extent should therefore be evaluated together, checking
individual layer pairs as well as the schedule average.

\paragraph{Nearby queries share candidates too.}
We also hold a pool fixed and measure how many full-search choices it contains for
later queries in the same layer. Across query positions in the RULER-1M captures,
DeepSeek-V4-Flash retains $84.6\%$ of these choices one query later and $78.5\%$ three
queries later. MiniMax-M3 retains $91.8\%$ and $89.7\%$ at the same offsets.
For comparison, adjacent-layer reuse at the same query gives $72.5\%$ coverage on
DeepSeek-V4-Flash and $74.2\%$ on MiniMax-M3.

These measurements come from sampled bursts of four consecutive prompt positions
during prefill. They test reuse at offsets of one to three positions within each burst,
not longer intervals or model-generated continuations. The results motivate short-range
temporal reuse, but do not show how a persistent pool affects generation quality;
we test that separately in Section~\ref{sec:eval-2d}. A stale pool can miss both new
keys and older keys that become relevant.

These findings support a method that shares broad candidates while preserving fresh
consumer scoring. The serving evaluation tests whether the resulting search restriction
pays off in practice (Section~\ref{sec:evaluation}). Section~\ref{sec:topg} then asks how
far the shared selected-index structure extends across larger prefill query groups.

% Source fragment: sections/methodology.tex
\section{MLIPS}
\label{sec:methodology}

% Source fragment: figures/fig_overview.tex
\begin{figure*}[t]
\centering
\begin{minipage}[t]{0.48\textwidth}
\centering
\begin{tikzpicture}[x=1cm,y=1cm,>=Stealth,
  box/.style={draw=mlipsblue!70,rounded corners=2pt,fill=mlipsblue!7,
  align=center,font=\small,inner sep=5pt},
  arr/.style={->,thick,draw=black!65}]
\node[box,fill=mlipsorange!15,draw=mlipsorange] (p) at (0,1.9)
  {Producer: score $N$ keys\\own top-$K$ + candidate pool};
\node[box] (pool) at (0,0.8) {$M$ shared candidate indices};
\node[box,text width=2.2cm] (c1) at (-1.45,-0.4)
  {Consumer $\ell$\\own scores\\top-$K$, own KV};
\node[box,text width=2.2cm] (c2) at (1.45,-0.4)
  {Consumer $\ell+1$\\own scores\\top-$K$, own KV};
\draw[arr] (p) -- (pool);
\draw[arr] (pool) -- (c1);
\draw[arr] (pool) -- (c2);
\end{tikzpicture}
\par\smallskip{\small (a) Share the search domain; re-select independently.}
\end{minipage}\hfill
\begin{minipage}[t]{0.49\textwidth}
\centering
\begin{tikzpicture}[x=0.69cm,y=0.61cm,
  p/.style={draw=mlipsorange,fill=mlipsorange!35,minimum width=5.8mm,
    minimum height=4.8mm,font=\scriptsize,inner sep=0pt},
  c/.style={draw=mlipsblue!65,fill=mlipsblue!12,minimum width=5.8mm,
    minimum height=4.8mm,font=\scriptsize,inner sep=0pt}]
\foreach \x in {0,1,2}{
  \node[font=\scriptsize] at (\x,1) {$\ell_{\x}$};
  \node[font=\scriptsize] at (\x+4.2,1) {$\ell_{\x}$};
}
\foreach \y/\q in {0/0,-1/1,-2/2}{
  \node[font=\scriptsize,anchor=east] at (-0.5,\y) {$q+\q$};
  \node[p] at (0,\y) {P};
  \node[c] at (1,\y) {C};
  \node[c] at (2,\y) {C};
  \foreach \x in {0,1,2}{\node[c] at (\x+4.2,\y) {C};}
}
\node[p] at (4.2,0) {P};
\foreach \y in {0,-1,-2}{
  \draw[dashed,rounded corners,draw=mlipsblue!80] (-.45,\y+.42)
    rectangle (2.45,\y-.42);
}
\draw[dashed,rounded corners,draw=mlipsblue!80] (3.75,.42)
  rectangle (6.65,-2.42);
\node[font=\small] at (1,-3.1) {Layer sharing};
\node[font=\small] at (5.2,-3.1) {Persistent decode reuse};
\end{tikzpicture}
\par\smallskip{\small (b) One pool serves a row or a tile of queries and layers.}
\end{minipage}
\caption{MLIPS producer--consumer design. Orange cells build a pool; blue cells
re-score its candidates. Dashed boxes show reuse at $h=2$ and, on the right,
$qh=2$. Consumers retain independent selections, which may overlap.}
\label{fig:overview}
\end{figure*}

\subsection{Share candidates, retain independent selection}
\label{sec:principle}
A producer runs the checkpoint's indexer over the visible domain, keeps its own top-$K$,
and publishes the top-$M$ candidate indices. Each consumer gathers its own indexer keys
at those positions, computes fresh scores, and selects its own top-$K$ for attention.
Only candidate addresses are shared: the consumer uses its own query, indexer keys,
attention weights, and KV entries. Figure~\ref{fig:overview} summarizes the mechanism.

The approximation is the restricted search domain. Re-scoring preserves the consumer's
ranking within the pool, but cannot recover an omitted key. For nested producer sets,
$\mathcal{P}_K\subseteq\mathcal{P}_M$ implies $R_{KM}\geq R_{KK}$; this is a coverage
property, not a guarantee of equal task accuracy. Appendix~\ref{app:cost} gives the
formal argument and operation accounting.

\subsection{Layerwise and persistent decode reuse}
\label{sec:mlips1d}
\paragraph{Layer sharing (1D MLIPS).}
We partition the $n$ indexer-equipped sparse layers into groups of at most $h+1$. The first layer in
each group produces a pool; the following $h$ consumers re-select from it.
The producer count is $p=\lceil n/(h+1)\rceil$, and the consumer count is $c=n-p$.
The final group may be shorter. Pools are built independently for each query, so layer
sharing applies during both prefill and decode. Changing $h$ changes producer placement
as well as the maximum consumer distance; it is not simply a continuous increase in
one fixed pair's distance.

\paragraph{Persistent decode pools (2D MLIPS).}
\label{sec:mlips2d}
A producer can retain its pool for $qh$ subsequent decode steps. It rebuilds once every
$qh+1$ steps and re-scores the stored candidates in between; consumer layers also
re-score the pool at each step. Each later query therefore receives fresh scores,
although the candidate membership is older. This persistence serves queries as they
arrive, extending the reuse schedule along the query axis.

A stored pool can omit older keys that become relevant and compressed keys created
after the build. The local-attention window still provides access to recent tokens,
but does not guarantee that restricted global selection matches full search.
Our measurements use $qh\leq3$ and compression ratio $r=4$. Admitting newly materialized
keys and triggering adaptive refreshes are possible extensions, not evaluated
corrections to these results.

\subsection{Indexer scoring cost}
\label{sec:split}
For $Q$ queries and $R$ candidates, the dominant dot-product work per sequence is
\begin{equation}
I(Q,R)=2H_iD_iQR,
\label{eq:indexer-ops}
\end{equation}
where $H_i$ and $D_i$ are the indexer head count and dimension. Summing over the
$n$ indexer-equipped layers, the full-search baseline costs $nI(Q,N)$.
Layer sharing replaces this with
\begin{equation}
C_{\mathrm{1D}}=pI(Q,N)+cI(Q,M).
\label{eq:split}
\end{equation}
Averaging over one decode rebuild cycle gives the scoring-work reduction
\begin{equation}
R_{\mathrm{ops}}(h,qh)=
\frac{nN(qh+1)}{pN+\left[p\,qh+c(qh+1)\right]M}.
\label{eq:2d-exact}
\end{equation}
For a concrete example, DeepSeek-V4-Flash has $n=21$ indexer-equipped CSA layers;
this counts only the layers that perform indexed sparse selection, not all backbone
layers. Its indexer uses $H_i=64$, $D_i=128$, and compression ratio $r=4$.
At $L=2^{20}$ tokens, the compressed domain therefore contains
$N=L/4=262{,}144$ entries. With $M=16{,}384$, Equation~\ref{eq:2d-exact} gives
scoring-work reductions of $3.03\times$ for $(h,qh)=(3,0)$ and $3.25\times$ for $(1,1)$.
These ratios describe indexer dot products. They exclude top-$K$/top-$M$ selection,
gathering, other score arithmetic, and memory traffic, and do not predict proportional
end-to-end speedups.

\begin{samepage}
\subsection{Coarse candidate generation}
\label{sec:mlipscp}
A \emph{coarse producer} first searches groups of keys, then refines its selection
within the retained groups. We call this variant MLIPS with a Coarse Producer
(MLIPS-CP).\par
\end{samepage}

It reduces the producer scan by grouping $P$ consecutive compressed keys
and scoring one mean representative per group. The producer keeps $b$ groups with
$bP=M$, publishes their members as the candidate pool, and performs a fine selection
for its own top-$K$. Consumers perform their usual re-selection within that pool. This combines
coarse-to-fine indexing~\citep{hisa} with cross-layer candidate sharing. Its modeled
reduction is
\begin{equation}
R_{\mathrm{CP}}(P,h,qh)=
\frac{nN(qh+1)}{pN/P+n(qh+1)M}.
\label{eq:cp-exact}
\end{equation}
The main coarse configuration uses $P=2$ and $b=8{,}192$.
Pooling can filter out useful groups; consumer refinement cannot restore them.
We evaluate this additional restriction separately from exact-producer sharing.
Appendix~\ref{app:cost} contains the parameter grids and an explanation of
representative-score dilution.

Appendix~\ref{app:implementation} describes the vLLM integration, buffers,
CUDA-graph handling, and selection checks.

% Source fragment: sections/evaluation.tex
\section{Evaluation}
\label{sec:evaluation}

\subsection{Setup}

We evaluate four long-context suites: LongBench-v2 ($503$ items), RULER-1M
($130$ items, ten per task), MRCR-v2 8-needle ($784$ items), and GraphWalks
($1{,}150$ items) \citep{longbenchv2,ruler,mrcr,graphwalks}.
The serving stack uses DeepSeek-V4-Flash in vLLM on one NVIDIA GB300 node,
with four GPUs at
$1{,}400$~W each and tensor parallelism $4$. The pool width is
$M=16{,}384$. Chunked prefill uses an $8{,}192$-token budget, with at most
$24$ concurrent sequences and $24$ requests in flight; prefix caching is
off. Expert parallelism and speculative decoding are disabled, and CUDA
graphs are enabled. Stock and MLIPS use the same checkpoint and software;
configuration flags select the indexer path
(Appendix~\ref{app:implementation}). Appendix~\ref{app:setup} gives precision,
software, and measurement details.

We report each suite's task score on a $0$--$100$ scale, end-to-end
workload speedup relative to stock, and GPU energy saved. Energy integrates
four-GPU power samples taken every $500$~ms, from the end of warm-up to
workload completion; it excludes startup, compilation, and graph capture,
as well as CPU, host-memory, and network energy. Every configuration is a
single serving run. Two stock boots differed by up to $1.3\%$ in time and
$0.3\%$ in energy, with score differences of approximately one to two
points or less across suites. This limited comparison does not establish
accuracy equivalence or statistical significance. We report observed
differences and retain each experiment campaign's own baseline.

\subsection{Layer sharing: consistent serving gains}
\label{sec:eval-1d}

Table~\ref{tab:main-results} evaluates $h=1,2,3$ with no persistent decode
reuse ($qh=0$). Every configuration is faster and uses less GPU energy than
its stock run. At $h=3$, speedups range from $1.139\times$ on MRCR-v2 to
$1.398\times$ on RULER-1M, with energy savings of $14.8$--$27.4\%$.
The corresponding score changes, in table order, are $-0.40$, $-0.29$,
$-1.49$, and $+0.29$ points. These observations support sharing a broad
candidate pool while preserving each consumer's independent selection;
the search restriction still changes some task scores.

\begin{table*}[t]
\centering\small
\setlength{\tabcolsep}{7pt}
\caption{Layer sharing with $M=16{,}384$ and $qh=0$.
Scores, speedups, and GPU energy savings use each suite's stock run.
Every row is one serving run.}
\label{tab:main-results}
\begin{tabular}{llrrr}
\toprule
Suite & $h$ & Score & Speedup & GPU energy saved \\
\midrule
LongBench-v2 & stock & $50.50$ & $1.000\times$ & $0.0\%$ \\
 & $1$ & $49.11$ & $1.066\times$ & $8.9\%$ \\
 & $2$ & $49.70$ & $1.162\times$ & $15.3\%$ \\
 & $3$ & $50.10$ & $1.197\times$ & $17.1\%$ \\
\midrule
RULER-1M & stock & $80.41$ & $1.000\times$ & $0.0\%$ \\
 & $1$ & $79.91$ & $1.177\times$ & $16.9\%$ \\
 & $2$ & $78.46$ & $1.342\times$ & $25.1\%$ \\
 & $3$ & $80.12$ & $1.398\times$ & $27.4\%$ \\
\midrule
MRCR-v2 & stock & $75.02$ & $1.000\times$ & $0.0\%$ \\
 & $1$ & $73.59$ & $1.015\times$ & $6.6\%$ \\
 & $2$ & $74.64$ & $1.117\times$ & $13.2\%$ \\
 & $3$ & $73.53$ & $1.139\times$ & $14.8\%$ \\
\midrule
GraphWalks & stock & $53.35$ & $1.000\times$ & $0.0\%$ \\
 & $1$ & $54.50$ & $1.131\times$ & $13.9\%$ \\
 & $2$ & $52.56$ & $1.290\times$ & $21.9\%$ \\
 & $3$ & $53.64$ & $1.326\times$ & $23.7\%$ \\
\bottomrule
\end{tabular}
\end{table*}

The size of the gain depends on how much work reaches the sharing path.
All RULER-1M prompts exceed its roughly $65$K-token activation threshold,
whereas about half of MRCR-v2 prompts do not. The speedup differences are
consistent with that distinction, although the aggregate measurements do
not isolate indexer time from other serving costs. Measured speedups are
therefore the relevant performance result; the operation model in
Section~\ref{sec:methodology} explains the opportunity rather than
predicting end-to-end gains directly.

Quality is not monotone in $h$. RULER-1M loses $1.95$ points at $h=2$, but
only $0.29$ at $h=3$. GraphWalks also has a reported long-context BFS dip at
$h=2$ (Appendix~\ref{app:results}); its cause remains unresolved.
Extending sharing to $h=5,6$ increases speed further, reaching
$1.572\times$ on RULER-1M at $h=6$, but its score then falls by $4.42$
points. The other three suites have smaller aggregate changes in these
runs. Thus the evidence favors modest sharing as the initial operating
region, while deeper schedules require workload-specific quality checks.
Appendix~\ref{app:results} retains all extended configurations.

Energy and latency also need separate measurements. At $h=1$, mean GPU
power falls by $2.2$--$5.2\%$, so energy savings exceed the fraction of wall
time saved. At deeper schedules, mean power can rise even as the run
finishes sooner. Lower indexer work consequently does not imply a fixed
power reduction.

\subsection{Decode reuse: a modest gain with quality risks}
\label{sec:eval-2d}

Persistent reuse across decode steps
(Section~\ref{sec:mlips2d}) adds little performance benefit on the four
long-context suites. With $h=0$, all $qh=1,2,3$ runs achieve only
$0.960$--$1.000\times$ stock speed; adding reuse to layer sharing changes
time little while sometimes reducing task scores substantially.
At $(h,qh)=(3,2)$, score changes are $-1.79$, $-9.68$, $-5.29$, and
$-3.01$ points on LongBench-v2, RULER-1M, MRCR-v2, and GraphWalks,
respectively. The full grid appears in Table~\ref{tab:eval-2d} and its
continuations. These results show that query locality alone does not
justify a fixed refresh schedule for all workloads.

To isolate decode, we use $30$ synthetic prompts of approximately $0.9$M
tokens, offering $24$ sessions with prefix caching disabled. We subtract
the wall time of a one-token generation from that of a $40{,}000$-token
generation. Table~\ref{tab:eval-decode} reports the mean of two draws after
discarding a warm-up draw, with one boot per arm and $h=0$ throughout.

\begin{table}[t]
\centering\small
\caption{Isolated decode at $h=0$. Changes are relative to stock.
This synthetic timing experiment does not measure answer quality.}
\label{tab:eval-decode}
\begin{tabular}{lrr}
\toprule
$qh$ & Decode time (s) & Change \\
\midrule
$0$ (stock) & $857.2$ & --- \\
$1$ & $867.2$ & $+1.2\%$ \\
$2$ & $850.5$ & $-0.8\%$ \\
$3$ & $829.0$ & $-3.3\%$ \\
\bottomrule
\end{tabular}
\end{table}

At $qh=3$, reuse reduces the measured decode time by $3.3\%$; at $qh=1$,
it increases time by $1.2\%$. This supports a limited performance
opportunity for longer generations, but neither combined layer-and-query
speedup nor generated-answer quality is established by this instrument.
A practical refresh policy must address newly relevant existing keys as
well as keys appended since the pool was built.

\subsection{Coarse producers: energy savings and failure cases}
\label{sec:eval-cp}

% Source fragment: figures/fig_tradeoffs.tex
\begin{figure*}[t]
\centering
\begin{minipage}[t]{0.49\textwidth}
\centering
\begin{tikzpicture}
\begin{axis}[mlipsaxis,width=\linewidth,height=4.1cm,
  xlabel={Layer-sharing schedule $h$},ylabel={Accuracy change (points)},
  xtick={1,2,3,5,6},ymin=-5,ymax=2,ytick={-4,-2,0,2},
  legend style={at={(0.02,0.03)},anchor=south west,draw=none,
    fill=white,font=\scriptsize,legend columns=2}]
\addplot[gray,dashed,forget plot] coordinates {(1,0) (6,0)};
\addplot[mlipsblue,mark=*,thick] coordinates
 {(1,-1.39) (2,-.80) (3,-.40) (5,-.20) (6,-1.39)};
\addplot[mlipsorange,mark=square*,thick] coordinates
 {(1,-.50) (2,-1.95) (3,-.29) (5,-3.33) (6,-4.42)};
\addplot[mlipsgreen,mark=triangle*,thick] coordinates
 {(1,-1.43) (2,-.38) (3,-1.49) (5,-.97) (6,-.12)};
\addplot[mlipspurple,mark=diamond*,thick] coordinates
 {(1,1.15) (2,-.79) (3,.29) (5,-1.67) (6,-.28)};
\legend{LB2,RULER,MRCR,GraphWalks}
\end{axis}
\end{tikzpicture}
\par{\small (a) Exact producer, $qh=0$.}
\end{minipage}\hfill
\begin{minipage}[t]{0.49\textwidth}
\centering
\begin{tikzpicture}
\begin{axis}[mlipsaxis,width=\linewidth,height=4.1cm,
  xlabel={Coarse group size $P$},ylabel={Accuracy change (points)},
  xmode=log,log basis x=2,xtick={2,4,8,16},
  xticklabels={2,4,8,16},ymin=-16,ymax=3,ytick={-15,-10,-5,0},
  legend style={at={(0.02,0.03)},anchor=south west,draw=none,
    fill=white,font=\scriptsize,legend columns=2}]
\addplot[gray,dashed,forget plot] coordinates {(2,0) (16,0)};
\addplot[mlipsblue,mark=*,thick] coordinates
 {(2,-.60) (4,1.79) (8,-.80) (16,.39)};
\addplot[mlipsorange,mark=square*,thick] coordinates
 {(2,.28) (4,-4.08) (8,-9.01) (16,-14.30)};
\addplot[mlipsgreen,mark=triangle*,thick] coordinates
 {(2,-.75) (4,-.04) (8,.17) (16,-3.47)};
\addplot[mlipspurple,mark=diamond*,thick] coordinates
 {(2,-.01) (4,1.25) (8,-1.36) (16,-2.52)};
\legend{LB2,RULER,MRCR,GraphWalks}
\end{axis}
\end{tikzpicture}
\par{\small (b) Coarse producer, $h=1$, $qh=0$, fixed $M$.}
\end{minipage}
\caption{Observed quality limits of more extensive sharing and coarse filtering.
Each configuration is one serving run; changes use the corresponding campaign's
stock baseline. Curves connect measured settings and are not uncertainty estimates.
Full timing, energy, and accuracy values are in Appendix~\ref{app:results}.}
\label{fig:tradeoffs}
\end{figure*}

The coarse producer replaces the dense producer scan with group
representatives (Section~\ref{sec:mlipscp}). At $P=2$, $b=8{,}192$,
$h=1$, and $qh=0$, its four speedups are $1.027$, $1.229$, $0.979$, and
$1.087\times$, with GPU energy savings of $10.2$, $22.5$, $7.7$, and
$16.5\%$, respectively. Score changes against this campaign's baseline are
$-0.60$, $+0.28$, $-0.75$, and $-0.01$ points. Energy falls even on
MRCR-v2, where latency worsens. Relative to the separately measured exact
producer, the coarse path is not consistently faster.

More aggressive settings expose substantial failures. At $P=2$, RULER-1M
loses $5.71$ points at $h=2$, and MRCR-v2 loses $3.30$ at $h=3$.
Increasing $P$ from $2$ to $16$ at fixed $h=1$ and $bP=M$ raises energy
savings on every suite, but the $P=16$ score changes are $+0.39$,
$-14.30$, $-3.47$, and $-2.52$ points. Replacing mean representatives by
elementwise maxima does not remove the RULER-1M failure.
Tables~\ref{tab:eval-cp}--\ref{tab:eval-rep} give all settings and raw
measurements. Figure~\ref{fig:tradeoffs} shows the quality changes for deeper
layer sharing and larger coarse groups.

Coarse-producer results use a separate campaign baseline and one run per
configuration. Attributing the observed losses to representative dilution
requires verified matched controls and repeats. Exact layer sharing is the
clearest measured benefit; decode reuse and coarse production expose
additional workload-dependent tradeoffs.

% Source fragment: sections/top_G.tex
\section{Shared Candidate Working Sets}
\label{sec:topg}

MLIPS shares a candidate pool across layers for each query. The traces also
suggest that nearby queries repeatedly select a common set of KV positions.
This raises a broader question: can a bounded pool represent the working set
of a group of queries across several sparse layers? We investigate this
structure through an exploratory \emph{Top-$G$} construction. The evidence
concerns shared selected positions; it does not establish similarity among
the corresponding KV vectors.

Let $\ell$ index CSA layers, and let $K_q^\ell$ and $M_q^\ell$ be the
producer's Top-$K$ and Top-$M$ index sets for query $q$. For a group of
recorded queries $C$, we construct one pool
\begin{equation}
 \mathcal{G}_{\ell,C}\subseteq
 \bigcup_{q\in C}(K_q^\ell\cup M_q^\ell),
 \qquad |\mathcal{G}_{\ell,C}|\leq G.
 \label{eq:topg-pool}
\end{equation}
The constructor prioritizes producer Top-$K$ memberships from queries whose
selections remain poorly covered, using residual Top-$M$ memberships to
break ties. This provides a concrete probe of shared structure without
using selections from consumer layers. Appendix~\ref{app:topg} gives the
algorithm and sampling protocol. We freeze the pool and measure
\begin{equation}
 R_{\ell,q}(d)=
 \frac{|K_q^{\ell+d}\cap\mathcal{G}_{\ell,C}|}
      {|K_q^{\ell+d}|},\qquad d\geq1,
 \label{eq:topg-recall}
\end{equation}
where $d$ counts subsequent CSA layers, separated by two physical model
layers in these traces.

\begin{figure}[t]
 \centering
 \definecolor{topgblue}{RGB}{18,91,139}
 \resizebox{\columnwidth}{!}{%
 \begin{tikzpicture}[x=.93cm,y=.46cm,font=\footnotesize]
   \node[anchor=east,align=right] at (-.55,.82) {Nominal\\query window};
   \node at (3,.99) {Subsequent CSA-layer distance};
   \foreach \c/\lab in {0/1,1/2,2/3,3/4,4/5,5/6,6/Mean}{
     \node at (\c,.36) {\lab};
   }
   \foreach \r/\lab in {0/{4,096},1/{16,384},2/{65,536},3/{Sequence}}{
     \node[anchor=east] at (-.55,-\r-.4) {\lab};
   }
   \foreach \r/\vals in {
     0/{82.4,79.4,79.4,76.9,74.3,73.5,77.6},
     1/{74.1,71.1,71.2,68.6,66.0,65.5,69.4},
     2/{57.1,54.1,54.6,52.0,49.5,49.8,52.9},
     3/{36.5,34.5,35.5,32.7,30.6,31.7,33.6}}{
     \foreach \val [count=\c from 0] in \vals {
       \fill[topgblue!\val!white]
         (\c-.49,-\r-.88) rectangle (\c+.49,-\r+.08);
       \ifdim\val pt>64pt
         \node[text=white] at (\c,-\r-.4) {\val};
       \else
         \node at (\c,-\r-.4) {\val};
       \fi
     }
   }
   \draw[white,line width=1.2pt] (5.5,.08)--(5.5,-3.88);
   \node[anchor=west] at (-.49,-4.45) {Top-$K$ index recall (\%); $G=32{,}000$};
 \end{tikzpicture}}
 \caption{A shared pool covers many subsequent selections, with coverage
 decreasing as its query window grows. Results use sampled queries from
 LB2 sequences 0--6; window sizes describe original positions, not the
 number of measured queries. Mean averages distances 1--6.}
 \label{fig:topg-recall}
\end{figure}

The capture contains eight LB2 sequences and 21 CSA layers. The primary
results cover sequences 0--6, with $K=512$, nominal $M=16\mathrm{K}$,
$G=32{,}000$, and context length at least $G$. Crucially, the trace records
four queries per 256 original positions: a nominal 4,096-position window
contains 64 recorded queries, not 4,096 measured queries.

Figure~\ref{fig:topg-recall} shows a shared working set across both axes.
For the smallest window, recall decreases from $82.4\%$ at the next CSA
layer to $73.5\%$ six layers later, averaging $77.6\%$. Increasing the
nominal window to 16,384 or 65,536 positions reduces mean recall to
$69.4\%$ or $52.9\%$; sequence-wide sharing reaches $33.6\%$.
Thus a fixed pool captures recurring selections but cannot absorb all
query-specific variation. Per-sequence measurements show the same window
ordering (Appendix~\ref{app:topg}).

These observations motivate consolidating producer selections into a
shared Top-$G$ pool, with each consumer scoring and selecting independently.
Shared index metadata and coordinated KV staging are possible uses. Their
benefits require measurement: $G>M$ can increase consumer scoring, and
index coverage alone establishes neither task accuracy nor speedup.
Construction over a whole query group can also depend on later queries;
a serving design must specify information availability and causal masks.
The present result is evidence for exploitable structure and a proposed
way to represent it. It leaves deployment and full-query validation open.

% Source fragment: sections/limitations.tex
\section{Limitations and Future Work}
\label{sec:limitations}

\paragraph{Evidence and generalization.}
Serving results cover one checkpoint and hardware stack; the cross-model evidence concerns
index coverage. Each serving configuration is a single run. Two stock boots provide a
limited variability reference, not an equivalence test or uncertainty estimate for every
comparison. Repeated paired evaluations and matched implementations of the closest reuse
baselines are needed to establish robustness and comparative benefit. The operation model
also excludes selection, gathering, and memory costs; phase-level profiling would clarify
why scoring reductions translate unevenly into latency savings.

\paragraph{Adaptive reuse.}
Uniform schedules expose weak layer pairs and unexplained quality changes, including the
$h=2$ dip. A useful next step is to select producer locations and pool widths using
calibration, then evaluate the chosen schedule on held-out workloads. Persistent decode
reuse needs refresh rules that balance pool age, new-key admission, and shifts in query
relevance. Generated-token fidelity and task accuracy should accompany timing results;
prefill locality alone cannot settle those questions.

\paragraph{From shared structure to Top-$G$.}
The prefill traces identify recurring selected positions, without establishing KV-vector
similarity or a deployed compression scheme. Testing all query rows, additional prompts,
and alternative constructors would show how robust the apparent working sets are.
A serving design must account for pool-construction cost, causal information flow, per-query
masking, task accuracy, and the cost of retrieving pooled entries. The current sampled
recall measurements do not establish a serving speedup.

\paragraph{Memory management.}
Candidate pools may bound a consumer's immediate working set, but realizing lower GPU
residency requires cache management and transfers. The offloading analysis in
Appendix~\ref{app:offload} is therefore a capacity opportunity rather than a measured
memory reduction. Future evaluation should report peak resident memory and transfer
overhead together with latency and quality.

% Source fragment: sections/conclusion.tex
\balance
\section{Conclusion}
\label{sec:conclusion}

MLIPS shares candidate search across sparse-attention layers while retaining each
consumer's own scoring and selection. Broad producer pools cover substantially more
consumer choices than final-selection overlap suggests across three model families.
On DeepSeek-V4-Flash, three consumer layers per producer give
$1.14$--$1.40\times$ serving speedups and $14.8$--$27.4\%$ lower GPU energy across four
suites, with observed accuracy changes of $-1.49$ to $+0.29$ points in single runs.
More extensive sharing exposes quality limits. Recurring selected KV positions across
prefill queries and layers further motivate Top-$G$, whose construction and deployment
require continued study. The results identify candidate working sets as a useful unit
for inference-time reuse.

\section*{Acknowledgements}
\label{sec:acknowledgements}
We thank the reviewers, advisors, and supporters for their valuable feedback and
support. We are grateful to NVIDIA Corporation for supporting this work. We
especially thank Ritchie Zhao and Bita Darvish Rouhani, as well as our managers
Yashwant Marathe, Bhargava Gopireddy, and Sachin Idgunji, for their insight,
guidance, and support.

\clearpage
\typeout{MLIPS_MAIN_PAGES=\number\numexpr\value{page}-1\relax}
\nobalance
\begingroup\small\raggedright
\bibliographystyle{plainnat}
\bibliography{mlipsrefs}

\begin{thebibliography}{17}
\providecommand{\natexlab}[1]{#1}
\providecommand{\url}[1]{\texttt{#1}}
\expandafter\ifx\csname urlstyle\endcsname\relax
  \providecommand{\doi}[1]{doi: #1}\else
  \providecommand{\doi}{doi: \begingroup \urlstyle{rm}\Url}\fi

\bibitem[Bai et~al.(2025)Bai, Tu, Zhang, Peng, Wang, Lv, Cao, Xu, Hou, Dong,
  et~al.]{longbenchv2}
Yushi Bai, Shangqing Tu, Jiajie Zhang, Hao Peng, Xiaozhi Wang, Xin Lv, Shulin
  Cao, Jiazheng Xu, Lei Hou, Yuxiao Dong, et~al.
\newblock Longbench v2: Towards deeper understanding and reasoning on realistic
  long-context multitasks.
\newblock In \emph{Proceedings of the 63rd Annual Meeting of the Association
  for Computational Linguistics (Volume 1: Long Papers)}, pages 3639--3664,
  2025.

\bibitem[Bai et~al.(2026)Bai, Dong, Jiang, Lv, Du, Zeng, Tang, and
  Li]{indexcache}
Yushi Bai, Qian Dong, Ting Jiang, Xin Lv, Zhengxiao Du, Aohan Zeng, Jie Tang,
  and Juanzi Li.
\newblock {IndexCache}: Accelerating sparse attention via cross-layer index
  reuse.
\newblock \emph{arXiv preprint arXiv:2603.12201}, 2026.
\newblock URL \url{https://arxiv.org/abs/2603.12201}.

\bibitem[Brandon et~al.(2024)Brandon, Mishra, Nrusimha, Panda, and
  Ragan-Kelley]{brandon2024cla}
William Brandon, Mayank Mishra, Aniruddha Nrusimha, Rameswar Panda, and
  Jonathan Ragan-Kelley.
\newblock Reducing transformer key-value cache size with cross-layer attention.
\newblock \emph{Advances in Neural Information Processing Systems},
  37:\penalty0 86927--86957, 2024.

\bibitem[{DeepSeek-AI}(2025)]{deepseekv32}
{DeepSeek-AI}.
\newblock {DeepSeek-V3.2}: Pushing the frontier of open large language models.
\newblock \emph{arXiv preprint arXiv:2512.02556}, 2025.
\newblock URL \url{https://arxiv.org/abs/2512.02556}.

\bibitem[{DeepSeek-AI}(2026{\natexlab{a}})]{deepseekv4}
{DeepSeek-AI}.
\newblock Deepseek-v4: Towards highly efficient million-token context
  intelligence.
\newblock Technical report, DeepSeek, 2026{\natexlab{a}}.
\newblock Model checkpoints:
  \url{https://huggingface.co/collections/deepseek-ai/deepseek-v4}.

\bibitem[{DeepSeek-AI}(2026{\natexlab{b}})]{deepseekv41}
{DeepSeek-AI}.
\newblock {DeepSeek-V4.1-Flash}: Pushing the limits of {KV} cache compression.
\newblock \emph{arXiv preprint arXiv:2609.19969}, 2026{\natexlab{b}}.
\newblock URL \url{https://arxiv.org/abs/2609.19969}.

\bibitem[Hsieh et~al.(2024)Hsieh, Sun, Kriman, Acharya, Rekesh, Jia, and
  Ginsburg]{ruler}
Cheng-Ping Hsieh, Simeng Sun, Samuel Kriman, Shantanu Acharya, Dima Rekesh, Fei
  Jia, and Boris Ginsburg.
\newblock Ruler: What's the real context size of your long-context language
  models?
\newblock In \emph{First Conference on Language Modeling}, 2024.

\bibitem[Lai et~al.(2026)Lai, Xu, Yang, Chen, Xu, Zeng, Li, Sun, Zhu, Zhang,
  Hu, Li, Gao, Li, Zhu, Zhou, and Zhao]{msa}
Xunhao Lai, Weiqi Xu, Yufeng Yang, Qiaorui Chen, Yang Xu, Lunbin Zeng, Xiaolong
  Li, Haohai Sun, Haichao Zhu, Vito Zhang, Jinkai Hu, Jiayao Li, Rui Gao, Zekun
  Li, Songquan Zhu, Jingkai Zhou, and Pengyu Zhao.
\newblock {MiniMax} sparse attention.
\newblock \emph{arXiv preprint arXiv:2606.13392}, 2026.
\newblock URL \url{https://arxiv.org/abs/2606.13392}.

\bibitem[Liu et~al.(2026)Liu, Cheng, Niu, Su, Xue, Liu, Yu, and Zhu]{pivot}
Hong Liu, Yuan Cheng, Lin Niu, Yi~Su, Yufei Xue, Anmin Liu, Guanghua Yu, and
  Jianchen Zhu.
\newblock {PIVOT}: Efficient query-group indexing for token-level sparse
  attention.
\newblock \emph{arXiv preprint arXiv:2607.24593}, 2026.
\newblock URL \url{https://arxiv.org/abs/2607.24593}.

\bibitem[{Meituan LongCat Team}(2026)]{lsa}
{Meituan LongCat Team}.
\newblock {LongCat} sparse attention: Taming the lightning via streaming-aware
  hierarchical cross-layer indexing.
\newblock \emph{arXiv preprint arXiv:2608.01662}, 2026.
\newblock URL \url{https://arxiv.org/abs/2608.01662}.

\bibitem[{OpenAI}(2025{\natexlab{a}})]{graphwalks}
{OpenAI}.
\newblock {GraphWalks}.
\newblock Dataset, 2025{\natexlab{a}}.
\newblock URL \url{https://huggingface.co/datasets/openai/graphwalks}.

\bibitem[{OpenAI}(2025{\natexlab{b}})]{mrcr}
{OpenAI}.
\newblock Multi-round coreference resolution (mrcr).
\newblock \url{https://huggingface.co/datasets/openai/mrcr},
  2025{\natexlab{b}}.

\bibitem[Sun et~al.(2024)Sun, Dong, Zhu, Huang, Wang, Ma, Zhang, Wang, and
  Wei]{sun2024yoco}
Yutao Sun, Li~Dong, Yi~Zhu, Shaohan Huang, Wenhui Wang, Shuming Ma, Quanlu
  Zhang, Jianyong Wang, and Furu Wei.
\newblock You only cache once: Decoder-decoder architectures for language
  models.
\newblock \emph{Advances in Neural Information Processing Systems},
  37:\penalty0 7339--7361, 2024.

\bibitem[Sun et~al.(2026)Sun, Zhang, Dong, Wang, and Wei]{clsa}
Yutao Sun, Yanqi Zhang, Li~Dong, Jianyong Wang, and Furu Wei.
\newblock You only index once: Cross-layer sparse attention with shared
  routing.
\newblock \emph{arXiv preprint arXiv:2606.06467}, 2026.
\newblock URL \url{https://arxiv.org/abs/2606.06467}.

\bibitem[Xu et~al.(2026)Xu, Meng, Jiang, Wang, Zhou, Wang, Wu, Pan, Tang, Pei,
  Liu, Yin, Sun, and Zhang]{hisa}
Yufei Xu, Fanxu Meng, Fan Jiang, Yuxuan Wang, Ruijie Zhou, Zhaohui Wang, Jiexi
  Wu, Zhixin Pan, Xiaojuan Tang, Wenjie Pei, Tongxuan Liu, Di~Yin, Xing Sun,
  and Muhan Zhang.
\newblock {HISA}: Efficient hierarchical indexing for fine-grained sparse
  attention.
\newblock \emph{arXiv preprint arXiv:2603.28458}, 2026.
\newblock URL \url{https://arxiv.org/abs/2603.28458}.

\bibitem[{Z.ai}(2026{\natexlab{a}})]{glm51}
{Z.ai}.
\newblock {GLM-5.1}: Model card.
\newblock \url{https://huggingface.co/zai-org/GLM-5.1}, 2026{\natexlab{a}}.
\newblock Released 2026-04-07; architecture as described in the GLM-5 technical
  report (arXiv:2602.15763).

\bibitem[{Z.ai}(2026{\natexlab{b}})]{glm52}
{Z.ai}.
\newblock {GLM-5.2}: Built for long-horizon tasks.
\newblock Official technical release post, June 2026{\natexlab{b}}.
\newblock URL \url{https://huggingface.co/blog/zai-org/glm-52-blog}.

\end{thebibliography}
\endgroup
\clearpage
\appendix
\typeout{MLIPS_APPENDIX_START_PAGE=\thepage}

% Embedded lossless copies of the original appendix plots.

% Source fragment: sections/appendix_methodology.tex
\section{Scoring Model and Detailed Reductions}
\label{app:cost}

\subsection{Indexer arithmetic and its scope}
For query position $t$ and visible key $j$, the Lightning Indexer computes
\begin{equation}
s[t,j]=\sum_{u=1}^{H_i}w[t,u]\,
\operatorname{ReLU}\!\left(\langle\mathbf q[t,u,:],\mathbf k[j,:]\rangle\right).
\label{eq:indexer-score}
\end{equation}
There is one key vector per indexed position shared across query heads.
Scoring $Q$ queries against $R$ candidates has matrix dimensions
\begin{equation}
[B,QH_i,D_i]\,[B,D_i,R]\ \longrightarrow\ [B,QH_i,R],
\label{eq:indexer-bmm}
\end{equation}
where $B$ is the sequence batch size. Counting multiply--accumulate as two operations
gives $I(Q,R)=2H_iD_iQR$ per sequence. DeepSeek-V4-Flash uses
$H_i=64$, $D_i=128$, $n=21$ CSA layers, $r=4$, $K=512$, and a local window
of $W=128$ tokens. Thus one query--candidate dot-product calculation costs
$16{,}384$ operations in this model.

ReLU, weighting, and head reduction add $O(H_iQR)$ arithmetic. Selection, gathering,
cache access, and transfers are also excluded from the ratios. Full prefill has
quadratic scoring growth in $L$; decode has linear growth in cached length per query.
The rectangular model approximates causal pair counts and boundary effects.
All reduction tables use $L=2^{20}$ tokens, $N=L/r=262{,}144$,
$M=16{,}384$, and hence $N/M=16$. This makes the nominal 1M point
explicit; the ratios are not independent of context length.

\subsection{Coverage under nested pools}
\begin{proposition}[Nested-pool coverage]
\label{prop:dominance}
For fixed producer and consumer score vectors, if
$\mathcal P_K\subseteq\mathcal P_M$, then
$|\mathcal C_K\cap\mathcal P_K|\leq|\mathcal C_K\cap\mathcal P_M|$.
\end{proposition}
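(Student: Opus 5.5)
The plan is to reduce the statement to monotonicity of set intersection under inclusion; no property of the scores is needed beyond the fact that they define the three index sets. I will first fix the producer and consumer score vectors together with a deterministic tie-breaking rule. With these fixed, $\mathcal C_K$, $\mathcal P_K$ and $\mathcal P_M$ are well-defined finite subsets of the visible domain $\{1,\dots,N\}$. The hypothesis $\mathcal P_K\subseteq\mathcal P_M$ is taken as given.

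The core step is the inclusion
\[
\mathcal C_K\cap\mathcal P_K\subseteq\mathcal C_K\cap\mathcal P_M .
\]
To check it, take any $j$ in the left-hand side. Then $j\in\mathcal C_K$, and $j\in\mathcal P_K\subseteq\mathcal P_M$, so $j$ lies in the right-hand side. Cardinality is monotone under inclusion for finite sets, which gives $|\mathcal C_K\cap\mathcal P_K|\leq|\mathcal C_K\cap\mathcal P_M|$. Dividing by $K>0$ yields $R_{KK}\leq R_{KM}$ in the notation of Equation~\ref{eq:recalls}.

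The only step I expect to need care is justifying the nesting hypothesis in the intended setting, since the argument above simply assumes it. The natural case is a producer that takes both sets from one score vector $s$ with a single consistent tie-breaking order. In that case $\mathcal P_K$ is the first $K$ elements of the sorted order and $\mathcal P_M$ is the first $M\geq K$ elements, so the nesting holds automatically.

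Nesting can fail in two situations, and I would note both as outside the proposition's hypothesis:
\begin{itemize}
\item the two selections break ties inconsistently;
\item as in MLIPS-CP, $\mathcal P_M$ comes from coarse group scores while $\mathcal P_K$ comes from a separate fine selection.
\end{itemize}
In the coarse case, nesting still holds if the fine selection is taken inside the retained groups, as Section~\ref{sec:mlipscp} describes.

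I would close by stressing that the inequality compares candidate availability only. It does not order retained attention mass or task accuracy.
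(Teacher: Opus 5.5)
Your proof is correct and matches the paper's one-line argument. Intersecting both sides of $\mathcal P_K\subseteq\mathcal P_M$ with $\mathcal C_K$ preserves inclusion, cardinality is monotone under inclusion, and your remarks on when nesting holds agree with the paper's caveat that the bound does not apply to independently constructed pools.
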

\begin{proof}
Intersecting nested sets with the same set preserves inclusion.
\end{proof}
This bounds index coverage only. It does not compare attention distributions or
task accuracy, and it does not apply to independently constructed pools unless the
required nesting holds.

\subsection{Layer and query schedules}
For $p=\lceil n/(h+1)\rceil$ and $c=n-p$, a decode cycle contains $qh+1$ steps.
Each producer scores $N$ once and $M$ on the remaining $qh$ steps; each consumer
scores $M$ on every step. The cycle work is proportional to
$pN+[p\,qh+c(qh+1)]M$, yielding Equation~\ref{eq:2d-exact}.

For uniform groups with $h+1$ dividing $n$, a pool serves
$T=(h+1)(qh+1)$ slots. The reduction becomes
\begin{equation}
R(T)=\frac{TN}{N+(T-1)M},
\qquad \lim_{T\to\infty}R(T)=N/M.
\label{eq:2d-reduction}
\end{equation}
A $2\times2$ uniform tile gives $3.37\times$, while the exact 21-layer
$(h,qh)=(1,1)$ schedule gives $3.25\times$. For 1D sharing at $h=1,2,3$,
the exact producer counts are $11,7,6$ and the consumer counts are $10,14,15$.
Table~\ref{tab:ops-2d} includes 1D in its $qh=0$ column.

\begin{table}[t]
\centering\small
\caption{Modeled indexer scoring-work reduction for the exact 21-layer stack.
The $qh=0$ column gives layerwise reuse alone. Values are reduction factors.}
\label{tab:ops-1d}\label{tab:ops-2d}
\begin{tabular}{lrrrr}
\toprule
 & $qh=0$ & $qh=1$ & $qh=2$ & $qh=3$\\
\midrule
$h=0$ & $1.00$ & $1.88$ & $2.67$ & $3.37$\\
$h=1$ & $1.81$ & $3.25$ & $4.42$ & $5.40$\\
$h=2$ & $2.67$ & $4.57$ & $6.00$ & $7.11$\\
$h=3$ & $3.03$ & $5.09$ & $6.59$ & $7.72$\\
\bottomrule
\end{tabular}
\end{table}

\subsection{Coarse-producer accounting}
The producer scores $N/P$ representatives and performs its own fine pass over
$bP=M$ candidates. Its build-step cost is $I(Q,N/P)+I(Q,M)$.
Every layer performs one fine pass on every step, so the cycle cost is proportional
to $pN/P+n(qh+1)M$, yielding Equation~\ref{eq:cp-exact}.
The grids in Table~\ref{tab:ops-cp} include configurations with substantial
observed quality losses; a large modeled reduction is not an operating-point
recommendation.

% Source fragment: figures/fig_hier.tex
\begin{figure*}[t]
\centering
\tikzset{
  k/.style     = {draw=gray!55, fill=gray!12, minimum width=3.2mm, minimum height=3.6mm,
                  inner sep=0pt},
  kdim/.style  = {draw=gray!40, fill=gray!6,  minimum width=3.2mm, minimum height=3.6mm,
                  inner sep=0pt},
  kcand/.style = {draw=gray!55, fill=blue!16, minimum width=3.2mm, minimum height=3.6mm,
                  inner sep=0pt},
  ksel/.style  = {draw=orange!70!black, fill=orange!75, minimum width=3.2mm,
                  minimum height=3.6mm, inner sep=0pt},
  rep/.style   = {draw=gray!60, fill=gray!18, minimum width=13.5mm, minimum height=3.6mm,
                  inner sep=0pt},
  repsel/.style= {draw=teal!80!black, fill=teal!30, very thick, minimum width=13.5mm,
                  minimum height=3.6mm, inner sep=0pt},
  lb/.style    = {font=\small, text=black!60, align=right},
  an/.style    = {font=\small, text=black!55},
  st/.style    = {font=\small\bfseries, text=black!70},
  ar/.style    = {-{Stealth[length=2mm]}, gray!60!black},
  sep/.style   = {gray!70, thin},
}

\begin{minipage}[t]{0.48\textwidth}
\centering
\resizebox{\linewidth}{!}{%
\begin{tikzpicture}[x=3.6mm, y=1cm]
  \foreach \i in {0,...,15}{ \node[k] at (\i,3.00) {}; }
  \foreach \b in {0,1,2,3}{ \draw[sep] ({4*\b-0.5},3.24) -- ({4*\b-0.5},2.76); }
  \draw[sep] (15.5,3.24) -- (15.5,2.76);
  \node[lb, anchor=east] at (-0.9,3.00) {keys $N$\\[-2pt] \tiny blocks of $B$};

  \draw[ar] (7.5,2.74) -- (7.5,2.36);
  \node[an, anchor=west] at (8.0,2.55) {score reps};
  \node[rep]    at (1.5,2.10) {};
  \node[repsel] at (5.5,2.10) {};
  \node[rep]    at (9.5,2.10) {};
  \node[repsel] at (13.5,2.10) {};
  \node[lb, anchor=east] at (-0.9,2.10) {reps $N/B$\\[-2pt] \tiny keep top-$b$};

  \draw[ar] (7.5,1.84) -- (7.5,1.46);
  \node[an, anchor=west] at (8.0,1.65) {gather};
  \foreach \i in {0,...,15}{ \node[kdim] at (\i,1.20) {}; }
  \foreach \i in {4,5,6,7,12,13,14,15}{ \node[kcand] at (\i,1.20) {}; }
  \node[lb, anchor=east] at (-0.9,1.20) {candidates\\[-2pt] \tiny $bB$};

  \draw[ar] (7.5,0.94) -- (7.5,0.56);
  \node[an, anchor=west] at (8.0,0.75) {refine};
  \foreach \i in {0,...,15}{ \node[kdim] at (\i,0.30) {}; }
  \foreach \i in {5,12,15}{ \node[ksel] at (\i,0.30) {}; }
  \node[lb, anchor=east] at (-0.9,0.30) {output\\[-2pt] \tiny top-$k$};

  \node[st] at (7.5,-0.18) {$bB > k$: refine; candidates discarded};
\end{tikzpicture}}
\\[3pt]
{\small\textbf{(a) HISA: candidates discarded after refine}}
\end{minipage}
\hfill
\begin{minipage}[t]{0.48\textwidth}
\centering
\resizebox{\linewidth}{!}{%
\begin{tikzpicture}[x=3.6mm, y=1cm]
  \foreach \i in {0,...,15}{ \node[k] at (\i,3.00) {}; }
  \foreach \b in {0,1,2,3}{ \draw[sep] ({4*\b-0.5},3.24) -- ({4*\b-0.5},2.76); }
  \draw[sep] (15.5,3.24) -- (15.5,2.76);
  \node[lb, anchor=east] at (-0.9,3.00) {keys $N$\\[-2pt] \tiny groups of $P$};

  \draw[ar] (7.5,2.74) -- (7.5,2.36);
  \node[an, anchor=west] at (8.0,2.55) {score reps};
  \node[rep]    at (1.5,2.10) {};
  \node[repsel] at (5.5,2.10) {};
  \node[rep]    at (9.5,2.10) {};
  \node[repsel] at (13.5,2.10) {};
  \node[lb, anchor=east] at (-0.9,2.10) {reps $N/P$\\[-2pt] \tiny keep top-$b$};

  \draw[ar] (7.5,1.84) -- (7.5,1.46);
  \node[an, anchor=west] at (8.0,1.65) {gather};
  \foreach \i in {0,...,15}{ \node[kdim] at (\i,1.20) {}; }
  \foreach \i in {4,5,6,7,12,13,14,15}{ \node[kcand] at (\i,1.20) {}; }
  \node[lb, anchor=east] at (-0.9,1.20) {top-$M$ pool\\[-2pt] \tiny published};

  \draw[ar] (7.5,0.94) -- (7.5,0.56);
  \node[an, anchor=west] at (8.0,0.75) {refine};
  \foreach \i in {0,...,15}{ \node[kdim] at (\i,0.30) {}; }
  \foreach \i in {5,12,15}{ \node[ksel] at (\i,0.30) {}; }
  \node[lb, anchor=east] at (-0.9,0.30) {own output\\[-2pt] \tiny top-$K$};

  \node[st] at (7.5,-0.18) {$bP = M$: pool = candidates; then refine};
\end{tikzpicture}}
\\[3pt]
{\small\textbf{(b) MLIPS-CP: candidates published as the \emph{pool}, then refine}}
\end{minipage}

\caption{Coarse-to-fine search and shared candidate production. Both score
representatives and gather the highest-scoring groups. HISA refines the gathered
candidates for one layer; MLIPS-CP publishes them as a pool and retains a fine
selection for each producer and consumer. Group sizes differ in practice; the
diagram uses equal granularity for illustration.}
\label{fig:hier}
\end{figure*}

\begin{table*}[t]
\centering\small
\caption{Modeled scoring-work reductions with a coarse producer, $b=M/P$.
Values are factors relative to full indexing; representative construction
and memory costs are excluded.}
\label{tab:ops-cp}
\begin{tabular}{lrrrr@{\hspace{1cm}}lrrrr}
\toprule
\multicolumn{5}{c}{$P=2$} & \multicolumn{5}{c}{$P=4$}\\
 & $qh=0$ & $1$ & $2$ & $3$ & & $qh=0$ & $1$ & $2$ & $3$\\
\midrule
$h=0$&1.78&3.20&4.36&5.33 & $h=0$&3.20&5.33&6.86&8.00\\
$h=1$&3.08&5.17&6.68&7.81 & $h=1$&5.17&7.81&9.42&10.50\\
$h=2$&4.36&6.86&8.47&9.60 & $h=2$&6.86&9.60&11.08&12.00\\
$h=3$&4.87&7.47&9.08&10.18 & $h=3$&7.47&10.18&11.59&12.44\\
\midrule
\multicolumn{5}{c}{$P=8$} & \multicolumn{5}{c}{$P=16$}\\
 & $qh=0$ & $1$ & $2$ & $3$ & & $qh=0$ & $1$ & $2$ & $3$\\
\midrule
$h=0$&5.33&8.00&9.60&10.67 & $h=0$&8.00&10.67&12.00&12.80\\
$h=1$&7.81&10.50&11.86&12.68 & $h=1$&10.50&12.68&13.62&14.15\\
$h=2$&9.60&12.00&13.09&13.71 & $h=2$&12.00&13.71&14.40&14.77\\
$h=3$&10.18&12.44&13.44&14.00 & $h=3$&12.44&14.00&14.61&14.93\\
\bottomrule
\end{tabular}
\end{table*}

\paragraph{Representative-score dilution.}
For a single head and a mean representative $\bar{\mathbf k}=P^{-1}\sum_j\mathbf k_j$,
convexity gives
\begin{equation}
\begin{split}
\operatorname{ReLU}(\langle\mathbf q,\bar{\mathbf k}\rangle)
&\leq P^{-1}\sum_j\operatorname{ReLU}(\langle\mathbf q,\mathbf k_j\rangle)\\
&\leq \max_j\operatorname{ReLU}(\langle\mathbf q,\mathbf k_j\rangle).
\end{split}
\label{eq:dilution}
\end{equation}
Cancellation can suppress a representative even when its group contains a high-scoring
key. This single-head inequality describes a possible ranking failure; it does not
prove that full multi-head rankings or task accuracy degrade monotonically in $P$.
Such a claim requires score-level evidence, the relevant head-weight assumptions,
and matched serving controls.

\newpage
\section{Implementation Details}
\label{app:implementation}

We implement producer and consumer kernels for prefill and decode in vLLM's
DeepSeek-V4 sparse-attention path. The checkpoint weights, attention operator,
scheduler, and KV-cache path are retained. Baseline and MLIPS use the same software,
with configuration flags selecting the indexer path.

\paragraph{Integration and buffers.}
The implementation overrides the Lightning-Indexer selection stage and a gather module
on vLLM's DeepSeek-V4 path. Producers and consumers have separate prefill and decode
kernel sites. An exact producer obtains its own top-$K$ and the wider top-$M$ pool
from one scoring pass; a consumer gathers and scores its own keys at the pooled
positions. A pool row contains $M$ \texttt{int32} indices: $64$~KiB at $M=16{,}384$.
Total metadata depends on query-row and persisted-pool counts, even though
each row's size is fixed at a given $M$.

During prefill, producers overwrite a transient candidate buffer before their
consumers use it. During decode, per-producer buffers retain pools across rebuild
cycles. Rows with at most $M$ compressed keys use stock indexing, corresponding to
$4M=65{,}536$ tokens at $r=4$. This threshold helps explain why workloads with more
short contexts obtain smaller gains.

\paragraph{Coarse representatives.}
Representatives are computed in fp32 from dequantized keys and quantized into the
indexer key format. In prefill, running sums track keys appended to each group.
In decode, completed representatives are addressed through the KV-page block table,
with a running sum for an incomplete group. One $D_i=128$ MXFP4 representative uses
$64$ value bytes and four scale bytes. There is one representative per $P$ keys;
actual allocated overhead additionally depends on page layout and temporary
state. The cache can be replaced by on-the-fly pooling at the cost of rereading the
original keys. Coarse production is used above $32{,}768$ compressed entries
(131,072 tokens); shorter rows use exact production.

\paragraph{Decode graphs and recency.}
The persistent path captures rebuild and reuse variants separately. A field in
the batch descriptor selects the replay outside the CUDA graph, avoiding a Python
decision fixed at capture time. Evaluation disables speculative decoding.
Within-step speculative grouping and reuse across successive steps are different
schedules; their combined acceptance behavior and performance are not measured here.

For longer reuse windows, one possible policy is to admit newly materialized
compressed keys and remove low-priority pool entries. Such a policy needs its own
selection rule and validation. It does not address older keys that become relevant
after the pool was built.

\paragraph{Selection checks.}
The exact producer's top-$K$ differs from stock by at most $0.02\%$ on the
million-token prompts in the reported check, with mismatches attributed to equal
scores. A probe at $(h,qh)=(1,1)$ performs a full scan on reuse steps and compares
it with selection from the stored pool. Over ten RULER-1M prompts, mean overlap
is $0.974$ and median overlap is $1.000$. This quantifies the probe's selection
fidelity; it is not an end-task accuracy test or a general guarantee against
implementation errors.

% Source fragment: sections/appendix_recall.tex
\section{Full Recall Measurements}
\label{app:recall}

\paragraph{Captures and native units.}
The recall study uses the ten largest prompts from each of LongBench-v2,
RULER-1M, and MRCR-v2 for each captured model family. It records four consecutive
query positions every 256 positions. DeepSeek-V4-Flash indexes compressed entries
and selects $K=512$; MiniMax-M3 selects 16 blocks of 128 tokens; GLM-5.1 selects
2,048 tokens~\citep{deepseekv4,msa,glm51}. The configured broad pools contain
16,384 compressed entries, 128 blocks, and 16,384 tokens, respectively.
These widths describe different token coverage and are not equal-cost comparisons.
GLM-5.1 captures end near 198K tokens; the other two families include up-to-1M inputs.

\paragraph{Aggregation and distances.}
The layer-sharing surfaces average over consumers in a schedule with $h$
consumers per producer. Producers are excluded because their own coverage is trivial.
For $h>1$, a cell averages multiple exact layer distances $d$ and a different producer
placement from neighboring schedules. The query-axis surfaces instead use explicit
producer--consumer layer and query offsets. Their same-query column therefore
need not exactly reproduce a schedule-average cell.

Position bands are ranges within prompts. A long prompt contributes queries to
several bands; the below-200K figure is a subset of the all-position figure.
The broad-pool columns use nested prefixes of a producer ranking when available.
MiniMax-M3 forces a local block into its selection. Its effective-containment
convention counts the guaranteed local hit; the difference from unadjusted
containment is about 0.010--0.016 at 128 blocks and at most 0.035 at narrow widths.

\paragraph{Scope of the evidence.}
These are count-based index recalls on sampled prefill activations, not attention-mass
retention. Four-query bursts support query offsets only up to three.
Within a fixed, identical causal prefix, prefill and decode evaluate the same
indexing computation, but generated-token distributions and multi-step approximation
effects require separate tests. The decode probe in Appendix~\ref{app:implementation}
provides a limited additional check.

\begin{figure*}[p]
\centering
\includegraphics[width=\textwidth,height=0.83\textheight,keepaspectratio]{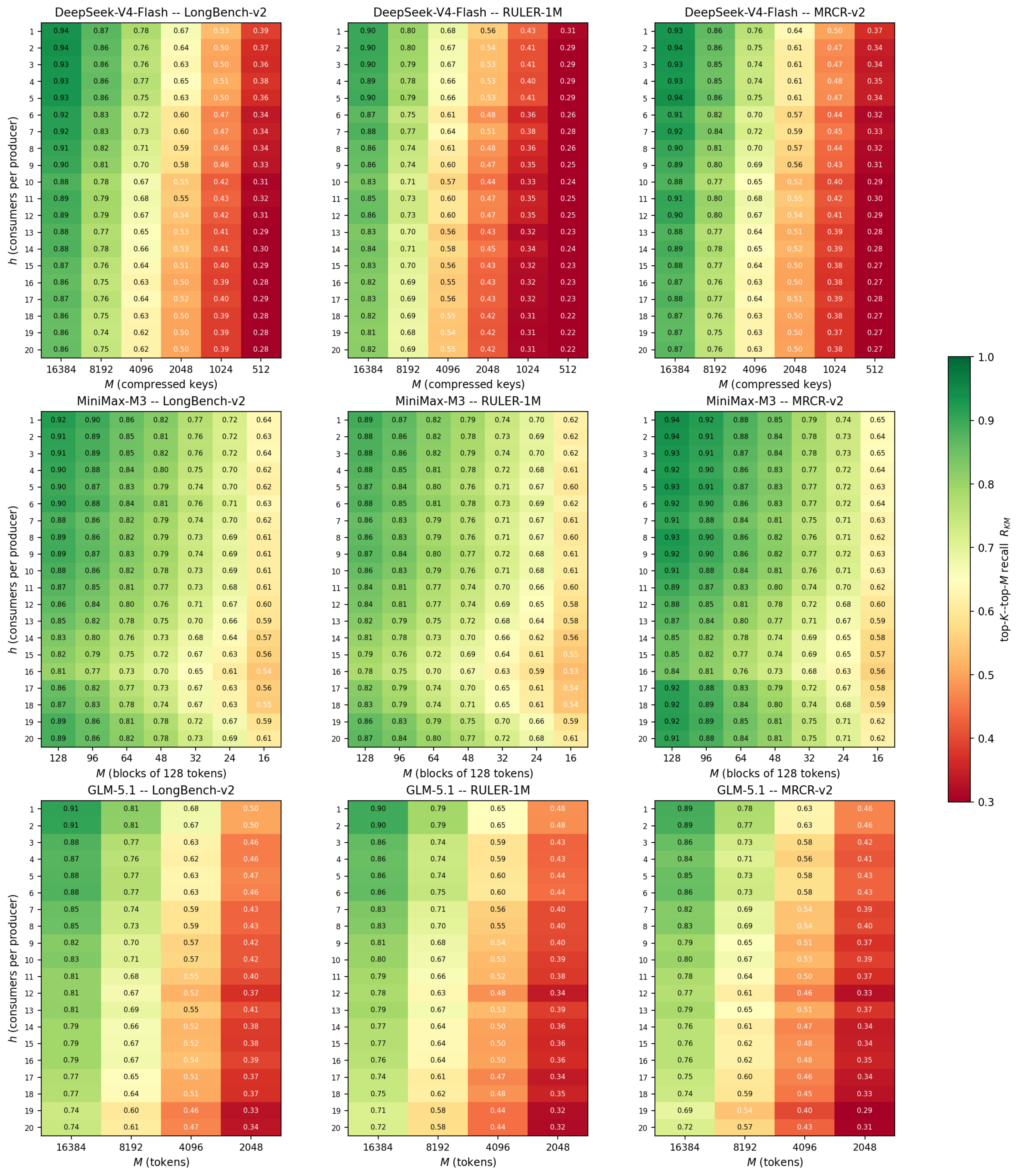}
\caption{Layer-sharing recall below 200K query positions. Rows are model families,
columns are suites, and each panel varies schedule $h$ and producer-pool width $M$.
The rightmost column is the native top-$K$ selection overlap; wider columns show
pool coverage. Units differ across families as labeled.}
\label{fig:recallA}
\end{figure*}

\begin{figure*}[p]
\centering
\includegraphics[width=\textwidth,height=0.83\textheight,keepaspectratio]{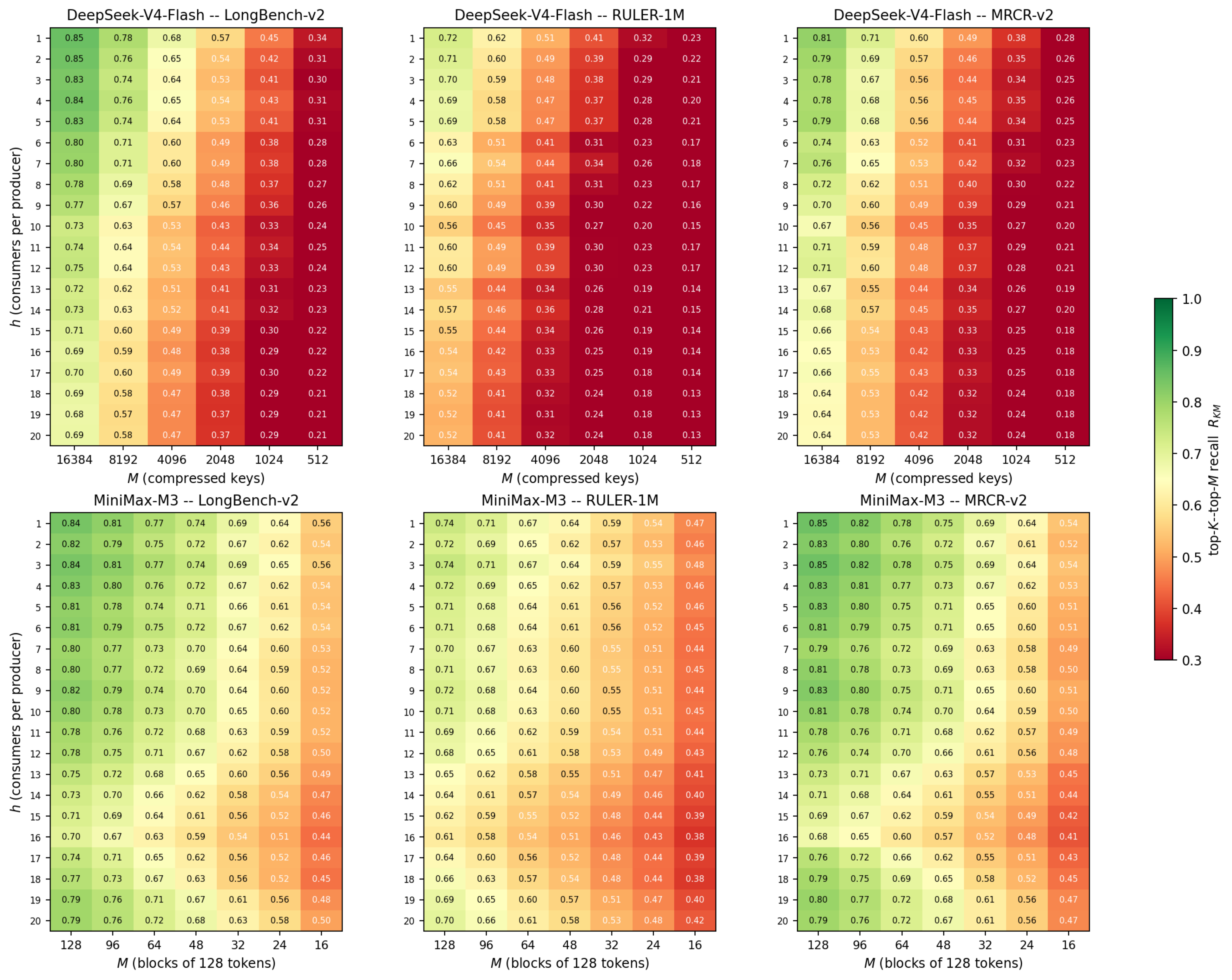}
\caption{Layer-sharing recall over all sampled positions in up-to-1M captures.
Broad candidate pools retain more consumer choices than final-selection inheritance
on these captures; neither measurement alone establishes downstream accuracy.
Axes follow Figure~\ref{fig:recallA}.}
\label{fig:recallB}
\end{figure*}

\begin{figure*}[p]
\centering
\includegraphics[width=\textwidth,height=0.83\textheight,keepaspectratio]{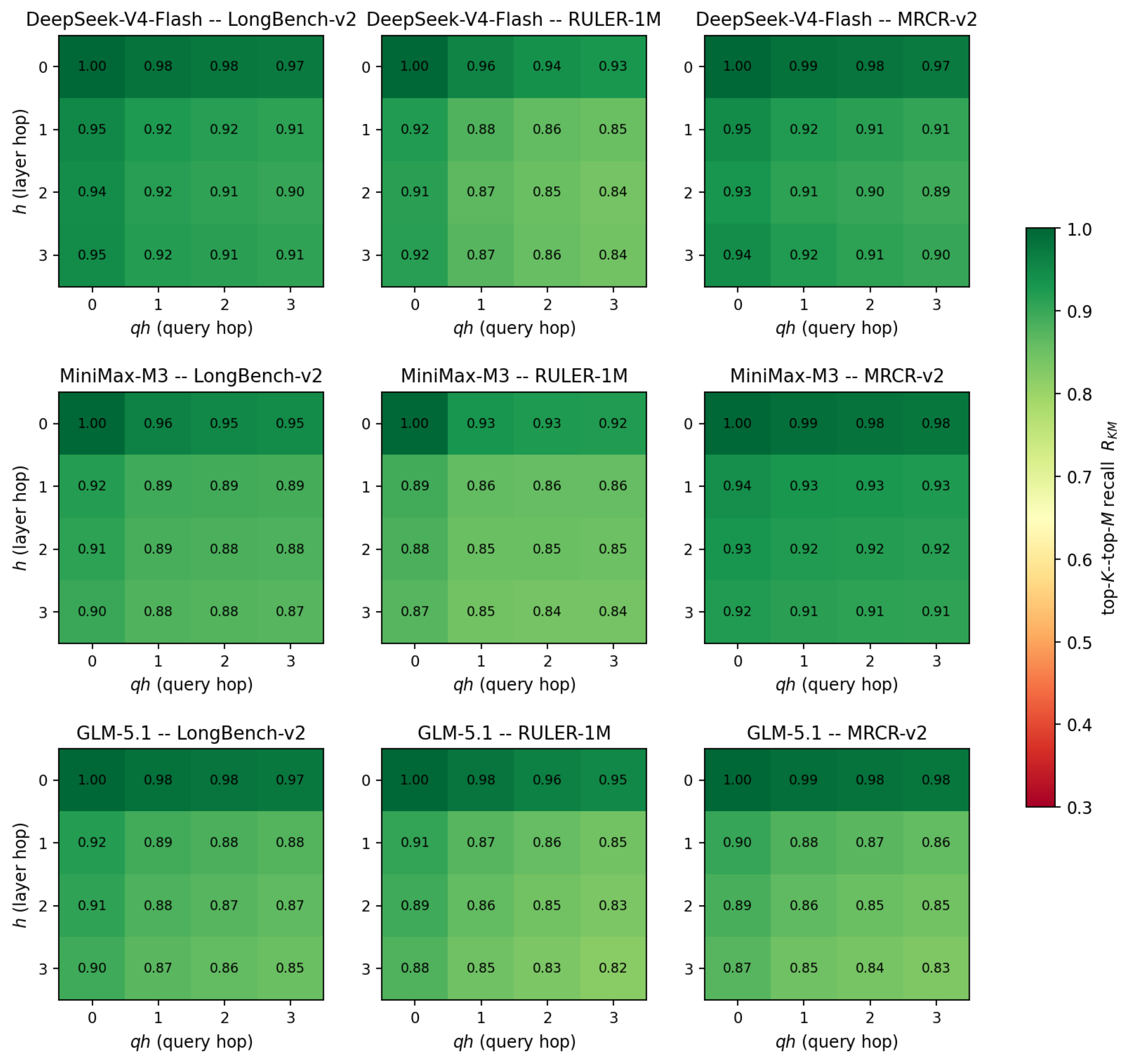}
\caption{Joint layer/query candidate coverage below 200K query positions.
Each panel varies exact sparse-layer distance (labeled $h$ in the original plot,
$d$ in the manuscript) and query offset $qh$ at the configured broad pool.
The zero-layer-distance row isolates cross-query reuse.}
\label{fig:recall2dA}
\end{figure*}

\begin{figure*}[p]
\centering
\includegraphics[width=\textwidth,height=0.83\textheight,keepaspectratio]{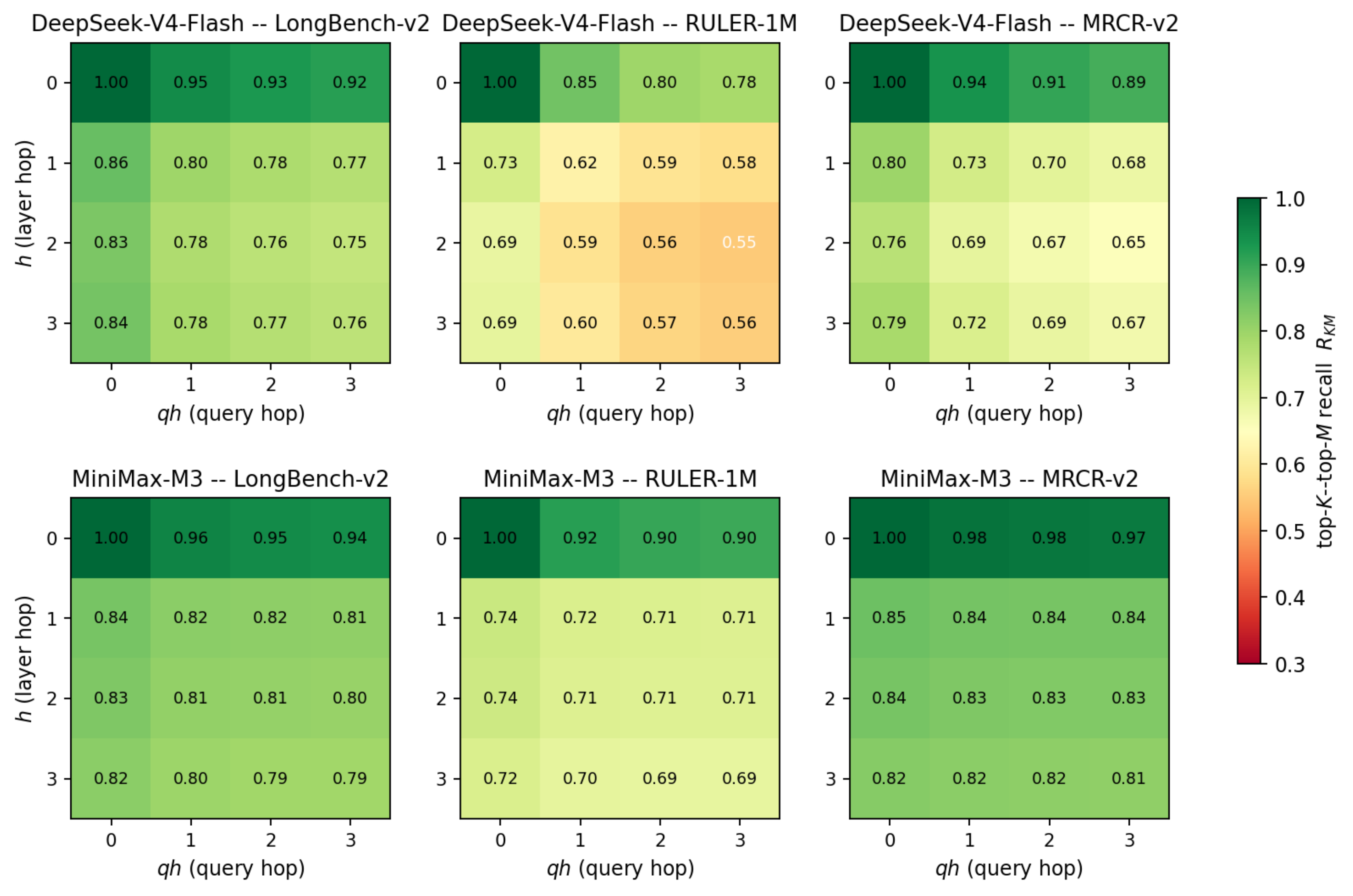}
\caption{Joint layer/query candidate coverage over all sampled positions in
up-to-1M captures. Exact layer distance is labeled $h$ in this original plot.
The self pair is 1.00; later queries retain substantial but imperfect coverage.
The captured query offsets end at three.}
\label{fig:recall2dB}
\end{figure*}

\paragraph{Schedule variation.}
Larger sharing groups generally lower coverage, but averages can rise locally when
producer placements change. For example, MiniMax-M3 at its 128-block pool gives
schedule-average coverage of 0.81 at $h=16$ and 0.89 at $h=19$.
These observations motivate evaluating the actual schedule, rather than treating
$h$ as a smooth distance parameter. Original plots display schedules through
$h=20$; captures extend to $h=56$ on MiniMax-M3 and $h=77$ on GLM-5.1.

\clearpage
% Source fragment: sections/appendix_evaluation.tex
\section{Serving Setup and Complete Results}
\label{app:evaluation}

\subsection{Model, software, and measurement protocol}
\label{app:setup}

The served checkpoint is the released DeepSeek-V4-Flash. Its routed MoE
expert weights are FP4, with E8M0 scales over blocks of $32$ values,
executed through vLLM's MXFP4 kernels. Attention projections, the shared
expert, and the indexer query projection use FP8 E4M3 weights with
$128\times128$ blocks. The MoE router, indexer compressor, and normalization
weights are bf16. Computation runs in bf16, with activations quantized on
the fly to FP8 for the FP8 GEMMs and to MXFP8 for the expert GEMMs.
The KV cache is FP8 and the indexer key cache is FP4.

We use vLLM 0.22.1 in the NVIDIA vLLM 26.06 container. Each run occupies
four GPUs of a single NVIDIA GB300 node, with a $1{,}400$~W power limit
per GPU and tensor parallelism $4$. Expert parallelism and speculative
decoding are off; CUDA graphs are on. The chunked-prefill budget is
$8{,}192$ tokens, the maximum number of concurrent sequences is $24$, the
context limit is $1{,}048{,}576$ tokens, and prefix caching is off.
The client maintains $24$ requests in flight. The exact-producer pool
has $M=16{,}384$ entries. Rows below the pooling threshold use the stock
path as described in Appendix~\ref{app:implementation}.

The four suites contain $503$ LongBench-v2 items, $130$ RULER-1M items
(ten in each of $13$ task categories), $784$ MRCR-v2 8-needle items, and
$1{,}150$ GraphWalks items. Scores are reported on a $0$--$100$ scale,
with $\Delta$acc in percentage points. Speedup is stock workload wall
time divided by configuration wall time. Energy saved is one minus the
configuration-to-stock GPU-energy ratio. The $\Delta$power column is the
relative change in mean total GPU power. Retaining these separate
quantities matters because shorter runs need not have lower mean power.

Each GPU's power is sampled every $500$~ms with \texttt{nvidia-smi};
energy integrates the four-GPU total from the end of warm-up to workload
completion. Boot, JIT compilation, and CUDA-graph capture are excluded.
The measurement does not include host CPUs, host memory, or network
equipment. A second stock boot gives a limited reproducibility reference:
suite scores differed by approximately one to two points or less, wall
time by at most $1.3\%$, and GPU energy by at most $0.3\%$.
Every configuration row remains a single serving run, so these two stock
boots do not provide uncertainty estimates for treatment effects.

The stock configurations disable all MLIPS paths by flag. The exact
sharing grid and max-representative experiment use the first stock rows
shown in Tables~\ref{tab:eval-1d} and~\ref{tab:eval-rep}. The
mean-representative and group-size campaign uses the separate stock rows
shown in Tables~\ref{tab:eval-cp} and~\ref{tab:eval-psweep}. We preserve
these baselines rather than pooling measurements across campaigns.
The coarse-producer comparisons do not include matched repeats or complete
per-arm launch configurations in the current reporting. We interpret the
group-size trend as an observed quality tradeoff, with representative
dilution as one possible explanation. Verifying the cause requires matched
launch configurations and repeated controls.

\subsection{Complete results and diagnostic observations}
\label{app:results}

\paragraph{Layer sharing.}
Table~\ref{tab:eval-1d} includes both the $h=1,2,3$ configurations in the
main paper and deeper schedules $h=5,6$. Every measured setting improves
wall time and GPU energy relative to its stock row. At $h=6$, speedups
are $1.310$, $1.572$, $1.233$, and $1.432\times$ on LongBench-v2,
RULER-1M, MRCR-v2, and GraphWalks, respectively. RULER-1M loses $3.33$
and $4.42$ points at $h=5,6$; the other suites have smaller aggregate
score changes in these runs.

Per-task RULER summaries suggest particular sensitivity in
\texttt{niah\_multikey\_2} and \texttt{niah\_multikey\_3}, which contain
many distractor keys. The three single-needle tasks score $100$ for
stock and the reported layer-sharing settings, except for one prompt
at $h=6$. The multi-key task with one distractor type, multi-query
retrieval, and multi-value retrieval remain at $100$ or within one
reported prompt of that value. At $h\leq3$, the two more difficult
multi-key tasks change by at most one reported prompt; at $h=5,6$ they
lose one to two reported prompts each. At $h=6$, the source also reports
one-prompt losses in a single-needle task, frequent-word retrieval, and
each QA task. Other tasks remain within two reported prompts of stock;
stock variable tracking and common-word scores are $22$ and $0$.
These task patterns do not establish why a candidate restriction changes
an answer. In particular, per-task scores should not be converted into
fixed $0.77$-point suite increments without accounting for the tasks'
scoring rules.

GraphWalks BFS scores are $34.6$ for stock and $36.4$, $34.4$, $37.5$,
$33.5$, and $37.2$ for $h=1,2,3,5,6$. On items longer than $256$K tokens,
the reported paired score change at $h=2$ is $-5.6$ points; the
corresponding changes at $h=1,3$ are $-0.2$ and $+1.6$ points.
The aggregate and length-specific results therefore need not move
together. The $h=2$ configuration has one boot, and its cause has not
been established.

At $h=1$, energy savings are $8.9$, $16.9$, $6.6$, and $13.9\%$, whereas
the corresponding fractions of wall time saved are $6.2$, $15.0$,
$1.5$, and $11.6\%$. Mean GPU power falls by $2.2$--$5.2\%$.
That reduction is not universal: at $h=5,6$ on RULER-1M mean power rises
by approximately $3$--$4\%$, while a shorter run still reduces energy.

\begin{table*}[t]
\centering\footnotesize\setlength{\tabcolsep}{4pt}
\caption{Complete layer-sharing results ($qh=0$). Scores and changes use the stock row within each suite; energy is GPU-only and measured after warm-up. Every configuration is one serving run.}
\label{tab:eval-1d}
\begin{tabular}{llrrrrrrr}
\toprule
suite & config & acc & $\Delta$acc & wall (s) & speedup & energy (Wh) & saved & $\Delta$power \\
\midrule
LongBench-v2 & stock & $50.50$ & --- & $2{,}970$ & --- & $3{,}326$ & --- & --- \\
 & $h{=}1$ & $49.11$ & $-1.39$ & $2{,}787$ & $1.066\times$ & $3{,}029$ & $8.9\%$ & $-3.0\%$ \\
 & $h{=}2$ & $49.70$ & $-0.80$ & $2{,}556$ & $1.162\times$ & $2{,}818$ & $15.3\%$ & $-1.5\%$ \\
 & $h{=}3$ & $50.10$ & $-0.40$ & $2{,}481$ & $1.197\times$ & $2{,}756$ & $17.1\%$ & $-0.8\%$ \\
 & $h{=}5$ & $50.30$ & $-0.20$ & $2{,}352$ & $1.263\times$ & $2{,}645$ & $20.5\%$ & $+0.4\%$ \\
 & $h{=}6$ & $49.11$ & $-1.39$ & $2{,}267$ & $1.310\times$ & $2{,}588$ & $22.2\%$ & $+1.9\%$ \\
\midrule
RULER-1M & stock & $80.41$ & --- & $5{,}982$ & --- & $6{,}624$ & --- & --- \\
 & $h{=}1$ & $79.91$ & $-0.50$ & $5{,}084$ & $1.177\times$ & $5{,}507$ & $16.9\%$ & $-2.2\%$ \\
 & $h{=}2$ & $78.46$ & $-1.95$ & $4{,}456$ & $1.342\times$ & $4{,}960$ & $25.1\%$ & $+0.5\%$ \\
 & $h{=}3$ & $80.12$ & $-0.29$ & $4{,}278$ & $1.398\times$ & $4{,}807$ & $27.4\%$ & $+1.5\%$ \\
 & $h{=}5$ & $77.08$ & $-3.33$ & $3{,}976$ & $1.504\times$ & $4{,}534$ & $31.6\%$ & $+3.0\%$ \\
 & $h{=}6$ & $75.99$ & $-4.42$ & $3{,}805$ & $1.572\times$ & $4{,}392$ & $33.7\%$ & $+4.2\%$ \\
\midrule
MRCR-v2 & stock & $75.02$ & --- & $4{,}911$ & --- & $5{,}093$ & --- & --- \\
 & $h{=}1$ & $73.59$ & $-1.43$ & $4{,}839$ & $1.015\times$ & $4{,}755$ & $6.6\%$ & $-5.2\%$ \\
 & $h{=}2$ & $74.64$ & $-0.38$ & $4{,}396$ & $1.117\times$ & $4{,}423$ & $13.2\%$ & $-3.0\%$ \\
 & $h{=}3$ & $73.53$ & $-1.49$ & $4{,}312$ & $1.139\times$ & $4{,}341$ & $14.8\%$ & $-2.9\%$ \\
 & $h{=}5$ & $74.05$ & $-0.97$ & $4{,}151$ & $1.183\times$ & $4{,}197$ & $17.6\%$ & $-2.5\%$ \\
 & $h{=}6$ & $74.90$ & $-0.12$ & $3{,}984$ & $1.233\times$ & $4{,}097$ & $19.6\%$ & $-0.9\%$ \\
\midrule
GraphWalks & stock & $53.35$ & --- & $12{,}264$ & --- & $12{,}976$ & --- & --- \\
 & $h{=}1$ & $54.50$ & $+1.15$ & $10{,}847$ & $1.131\times$ & $11{,}178$ & $13.9\%$ & $-2.6\%$ \\
 & $h{=}2$ & $52.56$ & $-0.79$ & $9{,}504$ & $1.290\times$ & $10{,}128$ & $21.9\%$ & $+0.7\%$ \\
 & $h{=}3$ & $53.64$ & $+0.29$ & $9{,}248$ & $1.326\times$ & $9{,}898$ & $23.7\%$ & $+1.2\%$ \\
 & $h{=}5$ & $51.68$ & $-1.67$ & $8{,}812$ & $1.392\times$ & $9{,}430$ & $27.3\%$ & $+1.1\%$ \\
 & $h{=}6$ & $53.07$ & $-0.28$ & $8{,}562$ & $1.432\times$ & $9{,}191$ & $29.2\%$ & $+1.4\%$ \\
\bottomrule
\end{tabular}
\end{table*}

\paragraph{Persistent decode reuse.}
Tables~\ref{tab:eval-2d}, \ref{tab:eval-2d-ruler},
\ref{tab:eval-2d-mrcr}, and~\ref{tab:eval-2d-graphwalks}
give the entire $qh=1,2,3$ grid at $h=0,1,2,3$.
Together with Table~\ref{tab:eval-1d}, they retain the full measured
layer-and-query matrix. At a fixed $h\geq1$, the reported speedups across
$qh=0,1,2,3$ differ by at most $4.1\%$, and energy savings by at most
$0.6$ percentage points. The query-only configurations are no faster
than stock, reaching $0.960$--$1.000\times$.
This is consistent with limited opportunity for decode-only reuse in
these workloads; it does not establish identical execution cost.

The largest reported score losses include RULER-1M at $(h,qh)=(3,2)$
($-9.68$ points) and $(3,3)$ ($-6.18$), MRCR-v2 at $(1,3)$ ($-6.73$)
and $(3,2)$ ($-5.29$), and GraphWalks at $(3,2)$ ($-3.01$) and
$(3,3)$ ($-4.12$). RULER-1M is not monotone in query reuse at $h=3$:
the changes are $-5.44$, $-9.68$, and $-6.18$ points.
The corresponding task scores for \texttt{niah\_multikey\_2} and
\texttt{niah\_multikey\_3} are $40$ and $50$ at $(3,2)$, and $90$ and
$40$ at $(2,2)$. The source reports that single-needle, multi-query, and
multi-value tasks remain at $100$ or within one prompt in this grid.
The matrix contains single runs and does not identify the mechanism
behind individual task failures.

The isolated timing experiment in Table~\ref{tab:eval-decode} uses
$30$ synthetic prompts of approximately $0.9$M tokens, $24$ sessions
offered, and no prefix caching. Decode time is estimated as the
difference between a $40{,}000$-token generation and a one-token
generation. Each arm has one boot and two measured draws after
discarding a warm-up draw. It measures $h=0$ only; the reported
$3.3\%$ reduction at $qh=3$ is not a measured combined layer-and-query
speedup. This workload has no answer-quality evaluation.

\begin{table*}[t]
\centering\footnotesize\setlength{\tabcolsep}{4pt}
\caption{Persistent decode reuse on LongBench-v2 ($qh\geq1$). The $h=0$ rows use query reuse alone; $qh=0$ results appear in Table~\ref{tab:eval-1d}. Ratios use the displayed stock row.}
\label{tab:eval-2d}
\begin{tabular}{llrrrrrrr}
\toprule
suite & config & acc & $\Delta$acc & wall (s) & speedup & energy (Wh) & saved & $\Delta$power \\
\midrule
LongBench-v2 & stock & $50.50$ & --- & $2{,}970$ & --- & $3{,}326$ & --- & --- \\
 & $h{=}0,qh{=}1$ & $50.30$ & $-0.20$ & $3{,}076$ & $0.966\times$ & $3{,}354$ & $-0.8\%$ & $-2.6\%$ \\
 & $h{=}0,qh{=}2$ & $50.70$ & $+0.20$ & $3{,}017$ & $0.984\times$ & $3{,}339$ & $-0.4\%$ & $-1.2\%$ \\
 & $h{=}0,qh{=}3$ & $48.91$ & $-1.59$ & $3{,}031$ & $0.980\times$ & $3{,}341$ & $-0.4\%$ & $-1.6\%$ \\
 & $h{=}1,qh{=}1$ & $49.50$ & $-1.00$ & $2{,}832$ & $1.049\times$ & $3{,}039$ & $8.6\%$ & $-4.2\%$ \\
 & $h{=}1,qh{=}2$ & $49.50$ & $-1.00$ & $2{,}848$ & $1.043\times$ & $3{,}045$ & $8.4\%$ & $-4.5\%$ \\
 & $h{=}1,qh{=}3$ & $50.70$ & $+0.20$ & $2{,}838$ & $1.047\times$ & $3{,}041$ & $8.6\%$ & $-4.3\%$ \\
 & $h{=}2,qh{=}1$ & $48.91$ & $-1.59$ & $2{,}556$ & $1.162\times$ & $2{,}819$ & $15.3\%$ & $-1.5\%$ \\
 & $h{=}2,qh{=}2$ & $49.90$ & $-0.60$ & $2{,}558$ & $1.161\times$ & $2{,}815$ & $15.4\%$ & $-1.7\%$ \\
 & $h{=}2,qh{=}3$ & $49.70$ & $-0.80$ & $2{,}552$ & $1.164\times$ & $2{,}813$ & $15.4\%$ & $-1.6\%$ \\
 & $h{=}3,qh{=}1$ & $49.30$ & $-1.20$ & $2{,}484$ & $1.196\times$ & $2{,}759$ & $17.0\%$ & $-0.8\%$ \\
 & $h{=}3,qh{=}2$ & $48.71$ & $-1.79$ & $2{,}516$ & $1.181\times$ & $2{,}768$ & $16.8\%$ & $-1.8\%$ \\
 & $h{=}3,qh{=}3$ & $49.50$ & $-1.00$ & $2{,}523$ & $1.177\times$ & $2{,}770$ & $16.7\%$ & $-2.0\%$ \\
\bottomrule
\end{tabular}
\end{table*}

\begin{table*}[t]
\centering\footnotesize\setlength{\tabcolsep}{4pt}
\caption{Persistent decode reuse on RULER-1M ($qh\geq1$). The $h=0$ rows use query reuse alone; $qh=0$ results appear in Table~\ref{tab:eval-1d}. Ratios use the displayed stock row.}
\label{tab:eval-2d-ruler}
\begin{tabular}{llrrrrrrr}
\toprule
suite & config & acc & $\Delta$acc & wall (s) & speedup & energy (Wh) & saved & $\Delta$power \\
\midrule
RULER-1M & stock & $80.41$ & --- & $5{,}982$ & --- & $6{,}624$ & --- & --- \\
 & $h{=}0,qh{=}1$ & $78.41$ & $-2.00$ & $6{,}081$ & $0.984\times$ & $6{,}653$ & $-0.4\%$ & $-1.2\%$ \\
 & $h{=}0,qh{=}2$ & $82.51$ & $+2.10$ & $6{,}027$ & $0.992\times$ & $6{,}638$ & $-0.2\%$ & $-0.6\%$ \\
 & $h{=}0,qh{=}3$ & $79.85$ & $-0.56$ & $6{,}032$ & $0.992\times$ & $6{,}638$ & $-0.2\%$ & $-0.6\%$ \\
 & $h{=}1,qh{=}1$ & $80.46$ & $+0.05$ & $5{,}119$ & $1.168\times$ & $5{,}519$ & $16.7\%$ & $-2.7\%$ \\
 & $h{=}1,qh{=}2$ & $78.36$ & $-2.05$ & $5{,}119$ & $1.168\times$ & $5{,}514$ & $16.8\%$ & $-2.8\%$ \\
 & $h{=}1,qh{=}3$ & $75.69$ & $-4.72$ & $5{,}119$ & $1.168\times$ & $5{,}516$ & $16.7\%$ & $-2.7\%$ \\
 & $h{=}2,qh{=}1$ & $78.99$ & $-1.42$ & $4{,}493$ & $1.331\times$ & $4{,}964$ & $25.1\%$ & $-0.3\%$ \\
 & $h{=}2,qh{=}2$ & $74.71$ & $-5.70$ & $4{,}476$ & $1.336\times$ & $4{,}961$ & $25.1\%$ & $+0.1\%$ \\
 & $h{=}2,qh{=}3$ & $76.59$ & $-3.82$ & $4{,}479$ & $1.335\times$ & $4{,}963$ & $25.1\%$ & $+0.1\%$ \\
 & $h{=}3,qh{=}1$ & $74.97$ & $-5.44$ & $4{,}307$ & $1.389\times$ & $4{,}819$ & $27.2\%$ & $+1.0\%$ \\
 & $h{=}3,qh{=}2$ & $70.73$ & $-9.68$ & $4{,}316$ & $1.386\times$ & $4{,}820$ & $27.2\%$ & $+0.8\%$ \\
 & $h{=}3,qh{=}3$ & $74.23$ & $-6.18$ & $4{,}337$ & $1.379\times$ & $4{,}826$ & $27.1\%$ & $+0.5\%$ \\
\bottomrule
\end{tabular}
\end{table*}

\begin{table*}[t]
\centering\footnotesize\setlength{\tabcolsep}{4pt}
\caption{Persistent decode reuse on MRCR-v2 ($qh\geq1$). The $h=0$ rows use query reuse alone; $qh=0$ results appear in Table~\ref{tab:eval-1d}. Ratios use the displayed stock row.}
\label{tab:eval-2d-mrcr}
\begin{tabular}{llrrrrrrr}
\toprule
suite & config & acc & $\Delta$acc & wall (s) & speedup & energy (Wh) & saved & $\Delta$power \\
\midrule
MRCR-v2 & stock & $75.02$ & --- & $4{,}911$ & --- & $5{,}093$ & --- & --- \\
 & $h{=}0,qh{=}1$ & $73.70$ & $-1.32$ & $5{,}097$ & $0.964\times$ & $5{,}148$ & $-1.1\%$ & $-2.6\%$ \\
 & $h{=}0,qh{=}2$ & $70.48$ & $-4.54$ & $5{,}114$ & $0.960\times$ & $5{,}159$ & $-1.3\%$ & $-2.7\%$ \\
 & $h{=}0,qh{=}3$ & $72.24$ & $-2.78$ & $5{,}083$ & $0.966\times$ & $5{,}154$ & $-1.2\%$ & $-2.2\%$ \\
 & $h{=}1,qh{=}1$ & $73.88$ & $-1.14$ & $4{,}839$ & $1.015\times$ & $4{,}747$ & $6.8\%$ & $-5.4\%$ \\
 & $h{=}1,qh{=}2$ & $71.12$ & $-3.90$ & $4{,}872$ & $1.008\times$ & $4{,}759$ & $6.6\%$ & $-5.8\%$ \\
 & $h{=}1,qh{=}3$ & $68.29$ & $-6.73$ & $4{,}906$ & $1.001\times$ & $4{,}770$ & $6.3\%$ & $-6.2\%$ \\
 & $h{=}2,qh{=}1$ & $72.05$ & $-2.97$ & $4{,}453$ & $1.103\times$ & $4{,}435$ & $12.9\%$ & $-4.0\%$ \\
 & $h{=}2,qh{=}2$ & $74.06$ & $-0.96$ & $4{,}478$ & $1.097\times$ & $4{,}439$ & $12.8\%$ & $-4.4\%$ \\
 & $h{=}2,qh{=}3$ & $71.81$ & $-3.21$ & $4{,}445$ & $1.105\times$ & $4{,}429$ & $13.0\%$ & $-3.9\%$ \\
 & $h{=}3,qh{=}1$ & $71.73$ & $-3.29$ & $4{,}347$ & $1.130\times$ & $4{,}349$ & $14.6\%$ & $-3.5\%$ \\
 & $h{=}3,qh{=}2$ & $69.73$ & $-5.29$ & $4{,}421$ & $1.111\times$ & $4{,}371$ & $14.2\%$ & $-4.7\%$ \\
 & $h{=}3,qh{=}3$ & $69.85$ & $-5.17$ & $4{,}382$ & $1.121\times$ & $4{,}362$ & $14.3\%$ & $-4.0\%$ \\
\bottomrule
\end{tabular}
\end{table*}

\begin{table*}[t]
\centering\footnotesize\setlength{\tabcolsep}{4pt}
\caption{Persistent decode reuse on GraphWalks ($qh\geq1$). The $h=0$ rows use query reuse alone; $qh=0$ results appear in Table~\ref{tab:eval-1d}. Ratios use the displayed stock row.}
\label{tab:eval-2d-graphwalks}
\begin{tabular}{llrrrrrrr}
\toprule
suite & config & acc & $\Delta$acc & wall (s) & speedup & energy (Wh) & saved & $\Delta$power \\
\midrule
GraphWalks & stock & $53.35$ & --- & $12{,}264$ & --- & $12{,}976$ & --- & --- \\
 & $h{=}0,qh{=}1$ & $53.02$ & $-0.33$ & $12{,}269$ & $1.000\times$ & $12{,}987$ & $-0.1\%$ & $+0.1\%$ \\
 & $h{=}0,qh{=}2$ & $53.17$ & $-0.18$ & $12{,}262$ & $1.000\times$ & $13{,}004$ & $-0.2\%$ & $+0.2\%$ \\
 & $h{=}0,qh{=}3$ & $51.83$ & $-1.52$ & $12{,}363$ & $0.992\times$ & $13{,}005$ & $-0.2\%$ & $-0.6\%$ \\
 & $h{=}1,qh{=}1$ & $51.98$ & $-1.37$ & $10{,}700$ & $1.146\times$ & $11{,}111$ & $14.4\%$ & $-1.8\%$ \\
 & $h{=}1,qh{=}2$ & $52.05$ & $-1.30$ & $10{,}868$ & $1.129\times$ & $11{,}149$ & $14.1\%$ & $-3.0\%$ \\
 & $h{=}1,qh{=}3$ & $52.62$ & $-0.73$ & $10{,}783$ & $1.137\times$ & $11{,}115$ & $14.3\%$ & $-2.6\%$ \\
 & $h{=}2,qh{=}1$ & $51.50$ & $-1.85$ & $9{,}799$ & $1.252\times$ & $10{,}192$ & $21.5\%$ & $-1.7\%$ \\
 & $h{=}2,qh{=}2$ & $51.45$ & $-1.90$ & $9{,}894$ & $1.240\times$ & $10{,}195$ & $21.4\%$ & $-2.6\%$ \\
 & $h{=}2,qh{=}3$ & $51.98$ & $-1.37$ & $9{,}726$ & $1.261\times$ & $10{,}165$ & $21.7\%$ & $-1.2\%$ \\
 & $h{=}3,qh{=}1$ & $51.49$ & $-1.86$ & $9{,}396$ & $1.305\times$ & $9{,}914$ & $23.6\%$ & $-0.3\%$ \\
 & $h{=}3,qh{=}2$ & $50.34$ & $-3.01$ & $9{,}328$ & $1.315\times$ & $9{,}895$ & $23.7\%$ & $+0.3\%$ \\
 & $h{=}3,qh{=}3$ & $49.23$ & $-4.12$ & $9{,}273$ & $1.323\times$ & $9{,}867$ & $24.0\%$ & $+0.6\%$ \\
\bottomrule
\end{tabular}
\end{table*}

\paragraph{Coarse producer and group size.}
Table~\ref{tab:eval-cp} reports the mean-representative producer at
$P=2$, $b=8{,}192$, and $M=16{,}384$, with and without $qh=1$.
At $h=1,qh=0$, task-score changes are $-0.60$, $+0.28$, $-0.75$,
and $-0.01$ points across the four suites; adding $qh=1$ changes them
to $+0.79$, $+1.28$, $-1.13$, and $-0.13$.
At $h=2$, RULER-1M loses $5.71$ points without query reuse and $5.58$
with it. At $h=3$, MRCR-v2 loses $3.30$ and $3.36$ points.
The coarse producer is faster than its own stock row on most settings,
but the $h=1$ MRCR-v2 configuration is slower with either query setting.

Table~\ref{tab:eval-psweep} varies $P=2,4,8,16$ at $h=1,qh=0$ while
holding $bP=M$ fixed. Energy savings rise on all suites. RULER-1M score
changes are $+0.28$, $-4.08$, $-9.01$, and $-14.30$ points, while its
speedup rises from $1.229$ to $1.352\times$. MRCR-v2 loses $3.47$
points at $P=16$; GraphWalks loses $1.36$ at $P=8$ and $2.52$ at
$P=16$. GraphWalks BFS falls from $36.1$ for this campaign's stock row
to $31.9$ at $P=16$. LongBench-v2 shows no monotone score trend.
At $P\geq4$, mean GPU power falls by approximately $9$--$16\%$,
compared with approximately $5$--$10\%$ at $P=2$.

The reported RULER-1M single-needle tasks score $100$ throughout this
sweep. \texttt{niah\_multikey\_3} scores $50$, $10$, and $10$ at
$P=4,8,16$, while the multi-query and multi-value scores at $P=16$
are $78$ and $75$. This pattern motivates testing whether coarse
candidate construction loses important groups. It does not, by itself,
separate candidate loss from the earlier harness confound or explain
each downstream error.

\begin{table*}[t]
\centering\footnotesize\setlength{\tabcolsep}{4pt}
\caption{Coarse producer with mean representatives, $P=2$ and $b=8{,}192$. Ratios and score changes use this campaign's stock rows. Every configuration is one serving run.}
\label{tab:eval-cp}
\begin{tabular}{llrrrrrrr}
\toprule
suite & config & acc & $\Delta$acc & wall (s) & speedup & energy (Wh) & saved & $\Delta$power \\
\midrule
LongBench-v2 & stock & $50.70$ & --- & $2{,}990$ & --- & $3{,}339$ & --- & --- \\
 & $h{=}1$ & $50.10$ & $-0.60$ & $2{,}910$ & $1.027\times$ & $2{,}998$ & $10.2\%$ & $-7.8\%$ \\
 & $h{=}2$ & $50.70$ & $+0.00$ & $2{,}602$ & $1.149\times$ & $2{,}793$ & $16.3\%$ & $-3.9\%$ \\
 & $h{=}3$ & $49.90$ & $-0.80$ & $2{,}590$ & $1.154\times$ & $2{,}736$ & $18.1\%$ & $-5.4\%$ \\
 & $h{=}1,qh{=}1$ & $51.49$ & $+0.79$ & $2{,}956$ & $1.012\times$ & $2{,}988$ & $10.5\%$ & $-9.5\%$ \\
 & $h{=}2,qh{=}1$ & $51.49$ & $+0.79$ & $2{,}646$ & $1.130\times$ & $2{,}780$ & $16.7\%$ & $-5.9\%$ \\
 & $h{=}3,qh{=}1$ & $48.71$ & $-1.99$ & $2{,}572$ & $1.163\times$ & $2{,}730$ & $18.3\%$ & $-5.0\%$ \\
\midrule
RULER-1M & stock & $79.13$ & --- & $5{,}990$ & --- & $6{,}644$ & --- & --- \\
 & $h{=}1$ & $79.41$ & $+0.28$ & $4{,}876$ & $1.229\times$ & $5{,}147$ & $22.5\%$ & $-4.8\%$ \\
 & $h{=}2$ & $73.42$ & $-5.71$ & $4{,}283$ & $1.399\times$ & $4{,}723$ & $28.9\%$ & $-0.6\%$ \\
 & $h{=}3$ & $78.56$ & $-0.57$ & $4{,}247$ & $1.411\times$ & $4{,}609$ & $30.6\%$ & $-2.2\%$ \\
 & $h{=}1,qh{=}1$ & $80.41$ & $+1.28$ & $4{,}935$ & $1.214\times$ & $5{,}116$ & $23.0\%$ & $-6.5\%$ \\
 & $h{=}2,qh{=}1$ & $73.55$ & $-5.58$ & $4{,}320$ & $1.387\times$ & $4{,}692$ & $29.4\%$ & $-2.1\%$ \\
 & $h{=}3,qh{=}1$ & $78.46$ & $-0.67$ & $4{,}212$ & $1.422\times$ & $4{,}587$ & $31.0\%$ & $-1.8\%$ \\
\midrule
MRCR-v2 & stock & $74.58$ & --- & $4{,}980$ & --- & $5{,}125$ & --- & --- \\
 & $h{=}1$ & $73.83$ & $-0.75$ & $5{,}089$ & $0.979\times$ & $4{,}729$ & $7.7\%$ & $-9.7\%$ \\
 & $h{=}2$ & $73.45$ & $-1.13$ & $4{,}642$ & $1.073\times$ & $4{,}438$ & $13.4\%$ & $-7.1\%$ \\
 & $h{=}3$ & $71.28$ & $-3.30$ & $4{,}535$ & $1.098\times$ & $4{,}339$ & $15.3\%$ & $-7.0\%$ \\
 & $h{=}1,qh{=}1$ & $73.45$ & $-1.13$ & $5{,}136$ & $0.970\times$ & $4{,}694$ & $8.4\%$ & $-11.2\%$ \\
 & $h{=}2,qh{=}1$ & $73.65$ & $-0.93$ & $4{,}663$ & $1.068\times$ & $4{,}393$ & $14.3\%$ & $-8.4\%$ \\
 & $h{=}3,qh{=}1$ & $71.22$ & $-3.36$ & $4{,}550$ & $1.095\times$ & $4{,}324$ & $15.6\%$ & $-7.7\%$ \\
\midrule
GraphWalks & stock & $53.82$ & --- & $12{,}094$ & --- & $12{,}976$ & --- & --- \\
 & $h{=}1$ & $53.81$ & $-0.01$ & $11{,}125$ & $1.087\times$ & $10{,}834$ & $16.5\%$ & $-9.2\%$ \\
 & $h{=}2$ & $51.10$ & $-2.72$ & $9{,}787$ & $1.236\times$ & $9{,}843$ & $24.1\%$ & $-6.3\%$ \\
 & $h{=}3$ & $53.83$ & $+0.01$ & $9{,}670$ & $1.251\times$ & $9{,}688$ & $25.3\%$ & $-6.6\%$ \\
 & $h{=}1,qh{=}1$ & $53.69$ & $-0.13$ & $11{,}120$ & $1.088\times$ & $10{,}702$ & $17.5\%$ & $-10.3\%$ \\
 & $h{=}2,qh{=}1$ & $52.54$ & $-1.28$ & $9{,}649$ & $1.253\times$ & $9{,}799$ & $24.5\%$ & $-5.3\%$ \\
 & $h{=}3,qh{=}1$ & $54.08$ & $+0.26$ & $9{,}500$ & $1.273\times$ & $9{,}649$ & $25.6\%$ & $-5.3\%$ \\
\bottomrule
\end{tabular}
\end{table*}

\begin{table*}[t]
\centering\footnotesize\setlength{\tabcolsep}{4pt}
\caption{Coarse-producer group-size sweep at $h=1$, $qh=0$, and fixed $bP=M=16{,}384$. The $P=2$ rows repeat Table~\ref{tab:eval-cp}; all ratios use the displayed stock rows.}
\label{tab:eval-psweep}
\begin{tabular}{llrrrrrrr}
\toprule
suite & config & acc & $\Delta$acc & wall (s) & speedup & energy (Wh) & saved & $\Delta$power \\
\midrule
LongBench-v2 & stock & $50.70$ & --- & $2{,}990$ & --- & $3{,}339$ & --- & --- \\
 & $P{=}2$ & $50.10$ & $-0.60$ & $2{,}910$ & $1.027\times$ & $2{,}998$ & $10.2\%$ & $-7.8\%$ \\
 & $P{=}4$ & $52.49$ & $+1.79$ & $2{,}878$ & $1.039\times$ & $2{,}828$ & $15.3\%$ & $-12.0\%$ \\
 & $P{=}8$ & $49.90$ & $-0.80$ & $2{,}786$ & $1.073\times$ & $2{,}722$ & $18.5\%$ & $-12.5\%$ \\
 & $P{=}16$ & $51.09$ & $+0.39$ & $2{,}768$ & $1.080\times$ & $2{,}680$ & $19.7\%$ & $-13.3\%$ \\
\midrule
RULER-1M & stock & $79.13$ & --- & $5{,}990$ & --- & $6{,}644$ & --- & --- \\
 & $P{=}2$ & $79.41$ & $+0.28$ & $4{,}876$ & $1.229\times$ & $5{,}147$ & $22.5\%$ & $-4.8\%$ \\
 & $P{=}4$ & $75.05$ & $-4.08$ & $4{,}663$ & $1.285\times$ & $4{,}695$ & $29.3\%$ & $-9.2\%$ \\
 & $P{=}8$ & $70.12$ & $-9.01$ & $4{,}458$ & $1.344\times$ & $4{,}449$ & $33.0\%$ & $-10.0\%$ \\
 & $P{=}16$ & $64.83$ & $-14.30$ & $4{,}432$ & $1.352\times$ & $4{,}347$ & $34.6\%$ & $-11.6\%$ \\
\midrule
MRCR-v2 & stock & $74.58$ & --- & $4{,}980$ & --- & $5{,}125$ & --- & --- \\
 & $P{=}2$ & $73.83$ & $-0.75$ & $5{,}089$ & $0.979\times$ & $4{,}729$ & $7.7\%$ & $-9.7\%$ \\
 & $P{=}4$ & $74.54$ & $-0.04$ & $4{,}984$ & $0.999\times$ & $4{,}455$ & $13.1\%$ & $-13.1\%$ \\
 & $P{=}8$ & $74.75$ & $+0.17$ & $4{,}845$ & $1.028\times$ & $4{,}301$ & $16.1\%$ & $-13.7\%$ \\
 & $P{=}16$ & $71.11$ & $-3.47$ & $4{,}910$ & $1.014\times$ & $4{,}268$ & $16.7\%$ & $-15.5\%$ \\
\midrule
GraphWalks & stock & $53.82$ & --- & $12{,}094$ & --- & $12{,}976$ & --- & --- \\
 & $P{=}2$ & $53.81$ & $-0.01$ & $11{,}125$ & $1.087\times$ & $10{,}834$ & $16.5\%$ & $-9.2\%$ \\
 & $P{=}4$ & $55.07$ & $+1.25$ & $10{,}398$ & $1.163\times$ & $9{,}870$ & $23.9\%$ & $-11.5\%$ \\
 & $P{=}8$ & $52.46$ & $-1.36$ & $10{,}148$ & $1.192\times$ & $9{,}431$ & $27.3\%$ & $-13.4\%$ \\
 & $P{=}16$ & $51.30$ & $-2.52$ & $9{,}825$ & $1.231\times$ & $9{,}185$ & $29.2\%$ & $-12.9\%$ \\
\bottomrule
\end{tabular}
\end{table*}

\paragraph{Representative operator.}
Table~\ref{tab:eval-rep} replaces a group's mean by its elementwise maximum
at $P=2$. At $h=1$, max-representative speedups against the displayed stock
rows are $1.005$, $1.203$, $0.954$, and $1.072\times$, compared with
$1.027$, $1.229$, $0.979$, and $1.087\times$ for the separately measured
mean campaign. The GPU energy savings are $10.4$, $22.7$, $7.7$, and
$17.1\%$, compared with $10.2$, $22.5$, $7.7$, and $16.5\%$.
These are campaign-specific ratios rather than a repeated paired
comparison of the two operators.

The max operator has RULER-1M score changes of $-1.59$, $-9.23$, and
$-2.14$ points at $h=1,2,3$, versus $+0.28$, $-5.71$, and $-0.57$
for the mean. At $h=2$, \texttt{niah\_multikey\_2} and
\texttt{niah\_multikey\_3} score $70$ and $50$ with the mean, compared
with $70$ and $10$ with the max. The source reports every other task
within one prompt of stock under both operators. Neither operator
eliminates the difficult RULER-1M configuration. Repeated comparisons
with verified matched controls are needed to rank them confidently.

\begin{table*}[t]
\centering\footnotesize\setlength{\tabcolsep}{4pt}
\caption{Coarse producer with elementwise-max representatives, $P=2$ and $b=8{,}192$. Ratios and score changes use the displayed stock rows, also used in Table~\ref{tab:eval-1d}.}
\label{tab:eval-rep}
\begin{tabular}{llrrrrrrr}
\toprule
suite & config & acc & $\Delta$acc & wall (s) & speedup & energy (Wh) & saved & $\Delta$power \\
\midrule
LongBench-v2 & stock & $50.50$ & --- & $2{,}970$ & --- & $3{,}326$ & --- & --- \\
 & max, $h{=}1$ & $50.30$ & $-0.20$ & $2{,}956$ & $1.005\times$ & $2{,}979$ & $10.4\%$ & $-10.0\%$ \\
 & max, $h{=}2$ & $51.29$ & $+0.79$ & $2{,}657$ & $1.118\times$ & $2{,}782$ & $16.3\%$ & $-6.5\%$ \\
 & max, $h{=}3$ & $49.70$ & $-0.80$ & $2{,}616$ & $1.136\times$ & $2{,}738$ & $17.7\%$ & $-6.5\%$ \\
\midrule
RULER-1M & stock & $80.41$ & --- & $5{,}982$ & --- & $6{,}624$ & --- & --- \\
 & max, $h{=}1$ & $78.82$ & $-1.59$ & $4{,}974$ & $1.203\times$ & $5{,}118$ & $22.7\%$ & $-7.1\%$ \\
 & max, $h{=}2$ & $71.18$ & $-9.23$ & $4{,}378$ & $1.366\times$ & $4{,}699$ & $29.1\%$ & $-3.1\%$ \\
 & max, $h{=}3$ & $78.27$ & $-2.14$ & $4{,}316$ & $1.386\times$ & $4{,}610$ & $30.4\%$ & $-3.5\%$ \\
\midrule
MRCR-v2 & stock & $75.02$ & --- & $4{,}911$ & --- & $5{,}093$ & --- & --- \\
 & max, $h{=}1$ & $75.59$ & $+0.57$ & $5{,}150$ & $0.954\times$ & $4{,}702$ & $7.7\%$ & $-12.0\%$ \\
 & max, $h{=}2$ & $72.36$ & $-2.66$ & $4{,}638$ & $1.059\times$ & $4{,}386$ & $13.9\%$ & $-8.8\%$ \\
 & max, $h{=}3$ & $73.10$ & $-1.92$ & $4{,}541$ & $1.081\times$ & $4{,}311$ & $15.3\%$ & $-8.4\%$ \\
\midrule
GraphWalks & stock & $53.35$ & --- & $12{,}264$ & --- & $12{,}976$ & --- & --- \\
 & max, $h{=}1$ & $53.65$ & $+0.30$ & $11{,}445$ & $1.072\times$ & $10{,}761$ & $17.1\%$ & $-11.1\%$ \\
 & max, $h{=}2$ & $50.92$ & $-2.43$ & $9{,}765$ & $1.256\times$ & $9{,}799$ & $24.5\%$ & $-5.1\%$ \\
 & max, $h{=}3$ & $53.71$ & $+0.36$ & $9{,}509$ & $1.290\times$ & $9{,}631$ & $25.8\%$ & $-4.3\%$ \\
\bottomrule
\end{tabular}
\end{table*}

\clearpage
% Source fragment: sections/appendix_topg.tex
\section{Exploratory Top-\texorpdfstring{$G$}{G} Construction and Trace Study}
\label{app:topg}

\subsection{Capture and measurement protocol}
\label{sec:topg-method}

The Top-$G$ study uses LB2 traces captured while running MLIPS. The capture
contains eight sequences and 21 compressed sparse-attention (CSA) layers
at physical layers $2,4,\ldots,42$. One CSA-layer step therefore spans two
physical model layers. Each recorded query supplies unordered producer
Top-$K$ and Top-$M$ index sets, with $K=512$ and nominal
$M=16\mathrm{K}$. The dump lacks aligned producer-layer scores, so the
constructor below uses set membership. Array position within a captured
Top-$M$ set is not treated as an importance ranking. Pass identifiers are
trace provenance and do not affect construction.

Four consecutive queries are recorded every 256 original query positions.
Groups of 64, 256, and 1,024 recorded rows consequently represent nominal
windows of 4,096, 16,384, and 65,536 original positions. Sequence-wide
sharing uses all eligible recorded rows in a sequence. These labels
describe sampling spans, not fully observed query groups. We cannot infer
how recall changes when all intervening queries participate: additional
queries may provide useful evidence but also compete for the same budget.

All reported pools have budget $G=32{,}000$. Measurements require context
length at least $G$ and average recall on capacity-eligible recorded rows.
The primary aggregate in Figure~\ref{fig:topg-recall} covers sequences
0--6 of the eight captured sequences and subsequent CSA-layer distances
1--6. Producer-layer fit, at distance zero, is excluded from reuse results.
The headline results are restricted to these seven sequences.
Per-sequence results below use the 11 producer layers with a complete
ten-step consumer horizon. Each sequence--window--distance cell is a
query-cell-weighted mean over eligible recorded rows and those producers.

For a recorded query $q$, write $K_q^\ell$ and $M_q^\ell$ for producer
memberships. The pool satisfies Equation~\ref{eq:topg-pool}, and evaluation
uses Equation~\ref{eq:topg-recall}. Recall is defined only for nonempty
consumer selections. The pool is frozen before examining any consumer
layer; consumer Top-$K$ labels are used only for evaluation.

\subsection{Coverage-debt constructor}

The constructor assigns more weight to queries whose producer selections
remain uncovered. Define the residual producer membership
$A_q=M_q^\ell\setminus K_q^\ell$ and, for a partially filled pool
$\mathcal{G}$, define
\begin{equation}
 D(S;\mathcal{G})=
 \begin{cases}
  1-|S\cap\mathcal{G}|/|S|,& |S|>0,\\
  0,& |S|=0.
 \end{cases}
 \label{eq:coverage-debt}
\end{equation}
Let $d_q^K=D(K_q^\ell;\mathcal{G})$ and
$d_q^A=D(A_q;\mathcal{G})$. An unselected candidate $x$ receives primary
and secondary scores
\begin{align}
 P(x)&=\sum_{q\in C:\,|K_q^\ell|>0}
  \frac{d_q^K}{|K_q^\ell|}\,\mathbf{1}[x\in K_q^\ell],
 \label{eq:primary-score}\\
 S(x)&=\sum_{q\in C:\,|A_q|>0}
  \frac{d_q^A}{|A_q|}\,\mathbf{1}[x\in A_q].
 \label{eq:secondary-score}
\end{align}
Zero-size memberships contribute no vote, avoiding undefined
normalization. Candidates are ranked lexicographically by descending
$(P(x),S(x),\tau(x))$, where $\tau$ is a fixed seeded key-ID hash used
for deterministic tie breaking; key ID resolves any remaining hash tie.
Thus residual Top-$M$ evidence never overrides a larger primary score.
After each batch of 1,024 selected keys, debts and scores are recomputed.
This is a batched coverage heuristic, not an optimization guarantee.

\begin{algorithm}[t]
 \caption{Coverage-debt pool construction}
 \label{alg:coverage-debt}
 \begin{algorithmic}[1]
  \REQUIRE Recorded sets $\{K_q^\ell,M_q^\ell:q\in C\}$,
    capacity $G$, batch $B=1{,}024$
  \STATE $\mathcal{G}\leftarrow\varnothing$
  \STATE $A_q\leftarrow M_q^\ell\setminus K_q^\ell$ for every $q\in C$
  \STATE $U\leftarrow\bigcup_{q\in C}(K_q^\ell\cup M_q^\ell)$
  \WHILE{$|\mathcal{G}|<\min(G,|U|)$}
   \STATE Compute debts using Equation~\ref{eq:coverage-debt}
   \STATE Compute $P(x),S(x)$ for $x\in U\setminus\mathcal{G}$
   \STATE $b\leftarrow\min(B,G-|\mathcal{G}|,|U\setminus\mathcal{G}|)$
   \STATE $V\leftarrow$ first $b$ candidates in descending $(P,S,\tau)$ order
   \STATE $\mathcal{G}\leftarrow\mathcal{G}\cup V$
  \ENDWHILE
  \STATE \textbf{return} $\mathcal{G}$
 \end{algorithmic}
\end{algorithm}

As a query becomes covered, its primary vote shrinks and the budget shifts
toward other queries. This construction supplies a concrete example of how
producer memberships can reveal a shared candidate working set. It uses no
consumer-layer labels and no trained predictor. That restriction concerns
the layer axis: constructing the pool from all queries in a group can still
make the candidate set for an earlier query depend on later queries.
Strict prefix-invariant inference therefore requires an explicit
construction schedule and causal masks, and may require a different
constructor. Masking future key positions alone does not remove dependence
of pool membership on future query evidence.

\subsection{Aggregate and per-sequence evidence}

For a fixed budget of 32,000 indices, the mean recall across distances
1--6 is $77.6\%$, $69.4\%$, $52.9\%$, and $33.6\%$ for nominal windows
of 4,096, 16,384, 65,536, and the full sequence, respectively
(Figure~\ref{fig:topg-recall}). The first-to-sixth-distance decrease is
$82.4\%$ to $73.5\%$ for the smallest window. At 65,536 positions,
the sampling span is sixteen times larger while mean recall remains
$52.9\%$. These results reveal reuse across queries and layers, together
with a clear capacity boundary as the query scope expands.

The ordering also holds separately for each primary sequence, 0--6:
the mean over ten consumer distances decreases as the nominal window
grows. For the 4,096-position window, per-sequence ten-distance mean
recall ranges from $65.3\%$ to $84.1\%$, and recall at distance ten
ranges from $59.9\%$ to $80.6\%$. The aggregate trend is consequently
not confined to one input, although the coverage level remains workload
dependent.

\begin{figure*}[t]
\centering
\includegraphics[width=\textwidth]{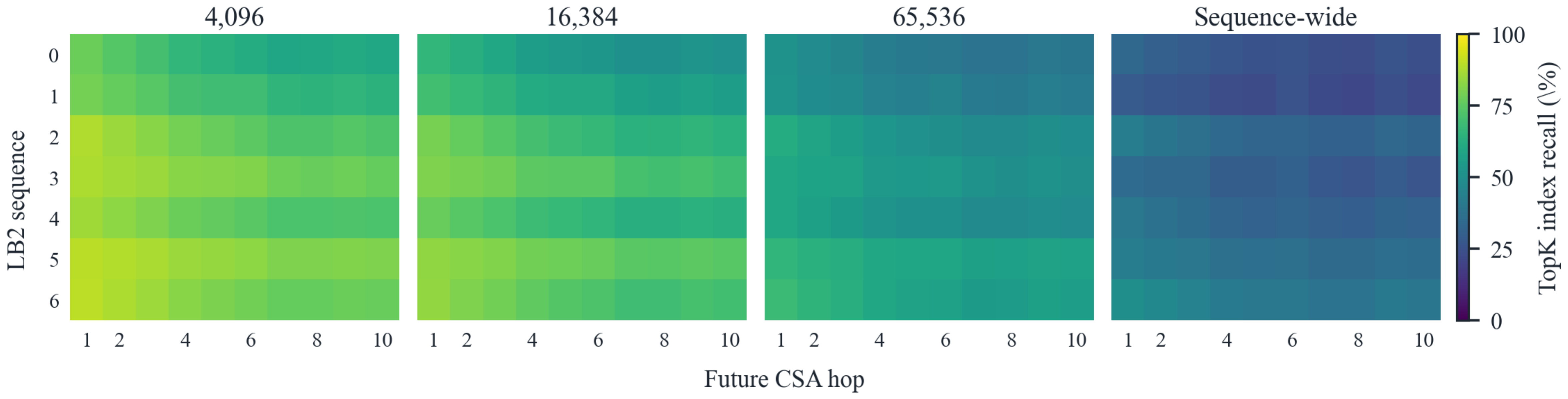}
\caption{Per-sequence Top-$G$ coverage at $G=32{,}000$. Rows are LB2 sequences
0--6 and columns are subsequent CSA-layer distances; panels vary the nominal
query window. Cells average eligible recorded queries over the 11 producers
with a complete ten-layer horizon. The original plot uses \emph{hop} for exact
layer distance $d$.}
\label{fig:topg-sequences}
\end{figure*}

\subsection{What remains to be established}

The trace study measures coverage of recorded selected indices. It does
not evaluate every query, end-task accuracy, construction overhead,
consumer selection cost, or serving performance. The constructor is an
exploratory probe rather than a validated runtime policy. Comparisons
against producer Top-$K$ frequency, fixed Top-$K$/Top-$M$ quotas, private
producer Top-$M$ pools, and a future-selection oracle would help isolate
which parts of the construction explain coverage; the present aggregate
alone does not establish their relative merit.

Shared candidate metadata and coordinated KV staging are potential
applications, but a pool of $G$ entries may exceed each query's original
Top-$M$ budget and increase consumer scoring. A shared index set also does
not imply that different layers share KV values. Realizing a residency
benefit requires layer-specific data movement and memory measurements.
Pool refresh, query-group boundaries, causal scheduling, and any recovery
for missed candidates remain deployment choices to evaluate.

\clearpage
% Source fragment: sections/appendix_offload.tex
\section{Candidate Working Sets and Offloading}
\label{app:offload}

Sharing candidate indices does not reduce the total KV state that an autoregressive
model must preserve. A future pool may contain entries absent from the current one.
Those entries must remain recoverable, for example in host memory, if they are evicted
from the GPU. This section analyzes a possible resident working set, not an implemented
or measured memory-saving mechanism.

\paragraph{A simplified capacity model.}
Suppose each producer keeps all $N$ compressed positions resident, and each consumer
keeps the indexer keys and attention KV for its current $M$ candidates resident.
Under that policy, the compressed-entry working set over $n$ layers is
$pN+cM$, giving the modeled ratio
\begin{equation}
R_{\mathrm{resident}}=\frac{nN}{pN+cM}.
\label{eq:offload-peak}
\end{equation}
This equals the layerwise dot-product reduction because both counts assign $N$
positions to a producer and $M$ to a consumer. The identity is a property of this
simplified policy, not an equivalence between compute savings and measured GPU memory.
Attention itself reads only the selected top-$K$; alternative staging policies could
use a different resident set.

If a producer also keeps only $M$ entries during its $qh$ reuse steps, the
cycle-average ratio becomes
\begin{equation}
R_{\mathrm{avg}}=
\frac{nN(qh+1)}{pN+[p\,qh+c(qh+1)]M}.
\label{eq:offload-avg}
\end{equation}
For $qh>0$, this is a time-average, not a reduction in the capacity needed for a
rebuild step. The values match Table~\ref{tab:ops-2d}, so we do not repeat that table.
Both models exclude weights, local windows, other attention layers, metadata,
transfer buffers, and runtime allocation overhead.

\paragraph{Transfers and deadlines.}
Consumer keys that enter a refreshed pool must be fetched before re-scoring;
the selected KV entries must arrive before attention. The pool becomes known
only when the producer publishes it, leaving a short window before the first
consumer. Persistent pools can lengthen this window on later reuse steps.
Achieving the producer's cycle-average working set additionally requires
reloading its full search domain before the next rebuild.

Pool turnover, transfer bandwidth, staging buffers, and scheduling determine whether
the policy is practical. The current top-$K$-in-pool recall measurements do not measure
pool-to-pool turnover, so they cannot quantify this transfer volume. An offloading
evaluation should report actual peak GPU residency, transferred bytes, latency,
GPU energy, and task quality together.

\clearpage
\end{document}